\newtheorem{theorem}{Theorem}
\newtheorem{lemma}{Lemma}
\journal{Neurocomputing}
\begin{document}

\begin{frontmatter}



\title{Symmetric low-rank representation for subspace clustering}


\author{Jie Chen}
\author{Haixian Zhang}
\author{Hua Mao}
\author{Yongsheng Sang}
\author{Zhang Yi \corref{cor1}}
\ead{zhangyi@scu.edu.cn; zyiscu@gmail.com}

\cortext[cor1]{Corresponding author}

\address{Machine Intelligence Laboratory, College of Computer Science, Sichuan University, Chengdu 610065, P. R. China}

\begin{abstract}

We propose a symmetric low-rank representation (SLRR) method for subspace clustering, which assumes that a data set is approximately drawn from the union of multiple subspaces. The proposed technique can reveal the membership of multiple subspaces through the self-expressiveness property of the data. In particular, the SLRR method considers a collaborative representation combined with low-rank matrix recovery techniques as a low-rank representation to learn a symmetric low-rank representation, which preserves the subspace structures of high-dimensional data. In contrast to performing iterative singular value decomposition in some existing low-rank representation based algorithms, the symmetric low-rank representation in the SLRR method can be calculated as a closed form solution by solving the symmetric low-rank optimization problem. By making use of the angular information of the principal directions of the symmetric low-rank representation, an affinity graph matrix is constructed for spectral clustering. Extensive experimental results show that it outperforms state-of-the-art subspace clustering algorithms.

\end{abstract}

\begin{keyword}

Subspace clustering \sep  spectral clustering \sep  symmetric low-rank representation \sep  affinity matrix \sep low-rank matrix recovery \sep dimension reduction

\end{keyword}

\end{frontmatter}


\section{Introduction}
\label{Introduction}

Subspace clustering is one of the fundamental topics in machine learning, computer vision, and pattern recognition, e.g., image representation \cite{Eldar2009RRS, Liu2010LRR}, face clustering \cite{Liu2010LRR1, Elhamifar2013SSC, Liu2010LRR}, and motion segmentation \cite{Rao2008MS, Lauer2009SC, Rao2010MotionSeg, Aldroubi2012MS, Vidala2013LRSC}. The importance of subspace clustering is evident in the vast amount of literature thereon, because it is a crucial step in inferring structure information of data from subspaces through data analysis \cite{Vidal2010SC, Sim2013SC, BMcWilliams2014SC}. Subspace clustering refers to the problem of clustering samples drawn from the union of low-dimensional subspaces, into their subspaces.

When considering subspace clustering in various applications, several types of available visual data are high-dimensional, such as digital images, video surveillance, and traffic monitoring. These high-dimensional data often have a small intrinsic dimension, which is often much smaller than the dimension of the ambient space. For instance, face images of a subject, handwritten images of a digit with different rotations, and feature trajectories of a moving object in a video often lie in a low-dimensional subspace of the ambient space \cite{Boult1991Factor, Basri2003}. To describe a given collection of data well, a more general model is to consider data drawn from the union of low-dimensional subspaces instead of a single lower-dimensional subspace \cite{Liu2010LRR, Dyer2013SC}.

Subspace clustering has been studied extensively over several decades. A number of techniques for exploiting low-dimensional structures of high-dimensional data have been proposed to tackle subspace clustering. Based on their underlying techniques, subspace clustering methods can be roughly divided into four categories according to the mechanism used: algebraic \cite{Vidal2005GPCA}, statistical \cite{Fischler1981RANSAC}, iterative \cite{Ho2003KSC}, and spectral clustering based methods \cite{Liu2010LRR, Ni2010LRRPSD, Zhuang2012NLRR, Peng2012L2Graph, Elhamifar2013SSC}. For a more detailed explanation of these algorithms, we refer the reader to \cite{Vidal2010SC}, which contains a recent review.

\begin{figure*}[!htbp]
\centering
\label{fig:example}
\includegraphics[width=0.9\textwidth]{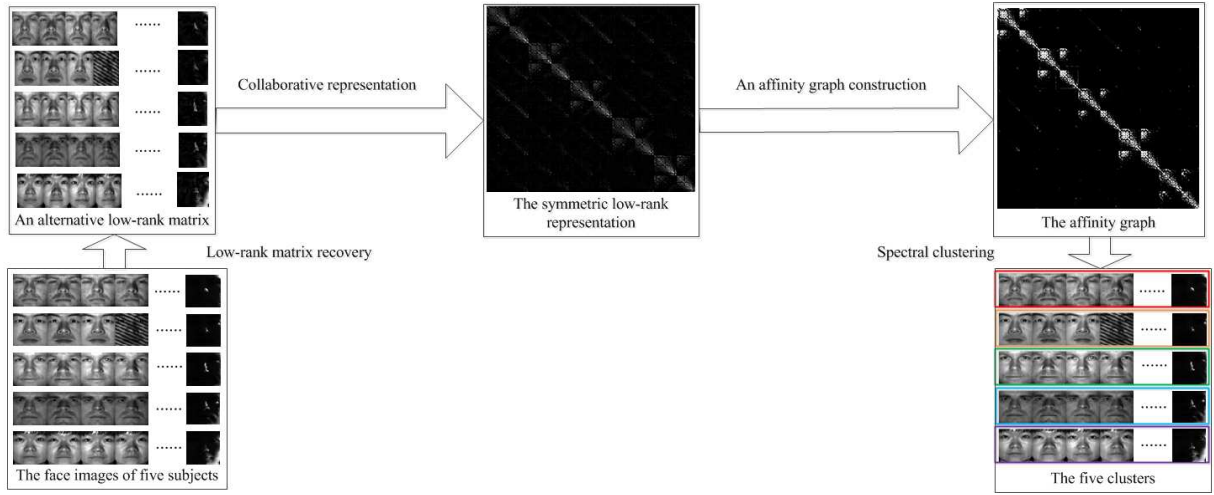}
\caption{Illustration of the clustering problem with five subjects.}
\label{fig:graphicalabstracts} 
\end{figure*}

If there are no errors in the data, i.e., the data are strictly drawn from multiple subspaces, several existing methods can be used to solve subspace clustering exactly \cite{Costeira1998MF, Wei2011RSI, Liu2010LRR, Elhamifar2013SSC}. However, the assumption of low-dimensional intrinsic structures of data is often violated when the real observations are contaminated by noise and gross corruption. Consequently, this results in inferior performance. A number of research efforts have focused on these problems. Spectral clustering based methods, such as sparse representation \cite{Elhamifar2013SSC}, low-rank representation \cite{Liu2010LRR}, and their extensions \cite{Ni2010LRRPSD, Liu2012FRR, Zhuang2012NLRR, Liu2013MF, Zhang2014FLRR} have yielded excellent performance in exploiting low-dimensional structures of high-dimensional data. Most existing methods perform subspace clustering involving two steps: first, learning an affinity matrix that encodes the subspace memberships of samples, and then obtaining the final clustering results with the learned affinity matrix using spectral clustering algorithms such as normalized cuts (NCuts) \cite{Luxburg2007SC, Shi2000Ncuts}. The fundamental problem is how to build a good affinity matrix in these steps.

Inspired by recent advances in ${l_0}$-norm and ${l_1}$-norm techniques \cite{Tibshiran1996Lasso, Donoho2006MinimalL1Norm, Cand2008l1norm}, the introduction of sparse representation based techniques has resulted in enhanced separation ability in subspace clustering. \citet{Elhamifar2013SSC} proposed a sparse subspace clustering (SSC) algorithm to cluster data points lying in the union of low-dimensional subspaces. SSC considers that each data point can be represented as a sparse linear combination of other points by solving an ${l_1}$-norm minimization problem. The ${l_1}$-norm minimization program can be solved efficiently using convex programming tools. If the subspaces are either independent or disjoint under the appropriate conditions, SSC succeeds in recovering the desired sparse representations. After obtaining the desired sparse representation to define an affinity matrix, spectral clustering techniques are used to obtain the final clustering results. SSC shows very promising results in practice. \citet{Nasihatkon2011SSC} further analyzed connectivity within each subspace based on the connection between the sparse representations through ${l_1}$-norm minimization. \citet{Wang2013NSSC} extended SSC by adding either adversarial or random noise to study the behavior of sparse subspace clustering. However, some critical problems remain unsolved. In particular, the above techniques find the sparsest representation of each sample individually, which leads to high computational cost. Besides, a global structural constraint on the sparse representation is lacking, i.e., there is no theoretical guarantee that the nonzero coefficients correspond to points in the same subspace in the presence of corrupted data.

Low-rank representation based techniques have been proposed to address these drawbacks \cite{Liu2010LRR, Bao2012IRPCA, Vidala2013LRSC}. \citet{Liu2010LRR} proposed the low-rank representation (LRR) method to learn a low-rank representation of data by capturing the global structure of the data. The LRR method essentially requires singular value decomposition (SVD) at each iteration and needs hundreds of iterations before convergence. The computational complexity of LRR becomes computationally impracticable if the dimensionality of the samples is extremely large. Although an inexact variation of the augmented Lagrange multiplier (ALM) method \cite{Lin2010ALM, Lin2011LADM}, which is used to solve the optimization problem in LRR, performs well, and generally converges adequately in many practical applications, its convergence property still lacks a theoretical guarantee. \citet{Vidala2013LRSC} considered low-rank subspace clustering (LRSC) as a non-convex matrix decomposition problem, which can be solved in closed form using SVD of the noisy data matrix. Although LRSC can be carried out on data contaminated by noise with reduced computational cost, the clustering performance could be seriously degraded owing to the presence of such corrupted data. \citet{Chen2014SC} presented a low-rank representation with symmetric constraints (LRRSC) method. LRRSC further exploits the angular information of the principal directions of the symmetric low-rank representation for improved performance. However, LRRSC cannot avoid iterative SVD computations either, which is still time consuming. Consequently, LRRSC suffers from heavy computational cost when computing a symmetric low-rank representation. To obtain a good affinity matrix for spectral clustering using low-rank representation techniques, which can lead to higher performance and lower computational cost, low-rank representation of high-dimensional data still deserves investigation.

In this paper, we address the problem of subspace clustering by introducing the symmetric low-rank representation (SLRR) method. SLRR can be regarded as an improvement of our previous work, i.e., LRRSC \cite{Chen2014SC}. Figure \ref{fig:graphicalabstracts} shows an intuitive clustering example using five subjects to illustrate our approach. Owing to the self-expressiveness property of the data, our motivation starts from an observation of collaborative representation, which plays an important role in classification and clustering tasks \cite{Zhang2011SRCR, Lu2012LSR}. In particular, our motivation is to integrate the collaborative representation combined with low-rank matrix recovery techniques into a low-rank representation to learn a symmetric low-rank representation. The representation matrix involves the symmetric and low-rankness property of high-dimensional data representation, thereby preserving the low-dimensional subspace structures of high-dimensional data. An alternative low-rank matrix can be obtained by making use of the low-rank matrix recovery techniques closely related to the specific clustering problems. In contrast with ${l_1}$-norm minimization or iterative shrinkage, SLRR obtains a symmetric low-rank representation in a closed form solution by solving the symmetric low-rank optimization problem. Thereafter, an affinity graph matrix can be constructed by computing the angular information of the principal directions of the symmetric low-rank representation for spectral clustering. Further details are discussed in Section \ref{sec:SLRR}.

The proposed SLRR method has several advantages:

\begin{enumerate}[a)]

\item It incorporates collaborative representation combined with low-rank matrix recovery techniques into a low-rank representation, and can successfully learn a symmetric low-rank representation, which preserves the multiple subspace structure, for subspace clustering.
\item A symmetric low-rank representation can be obtained in a closed form solution by the symmetric low-rank optimization problem, which is similar to solve a regularized least squares regression. Consequently, it avoids iterative SVD operations, and can be employed by large-scale subspace clustering problems with the advantages of computational stability and efficiency.
\item Compared with state-of-the-art methods, our experimental results using benchmark databases demonstrate that the proposed method not only achieves competitive performance, but also dramatically reduces computational cost.
\end{enumerate}

The remainder of the paper is organized as follows. A brief overview of some existing work on rank minimization is given in Section \ref{sec:Relatedwork}. Section \ref{sec:SLRR} provides a detailed description of the proposed SLRR for subspace clustering. Section \ref{sec:Experiments} presents the experiments to evaluate the proposed SLRR on benchmark databases. Finally, Section \ref{sec:Conclusions} concludes the paper.

\section{Review of previous work}
\label{sec:Relatedwork}

Let $X = [{x_1},{x_2}...,{x_n}] \in {\mathbf{R}^{d \times n}}$ be a set of $d$-dimensional data vectors drawn from the union of $k$ subspaces $\{ {S_i}\} _{i = 1}^k$ of unknown dimensions. Without loss of generality, we can assume $X = [{X_1},{X_2},...,{X_k}]$, where ${X_i}$ consists of the vectors of ${S_i}$. The task of subspace clustering involves clustering data vectors into the underlying subspaces. This section provides a review of low-rank representation techniques for subspace clustering.

\citet{Liu2010LRR} proposed the LRR method for subspace clustering. In the absence of noise, LRR solves the following rank minimization problem:
\begin{equation}\label{eq:lrr1}
\mathop {\min }\limits_{Z,E} rank(Z) \qquad s.t. \qquad X = AZ,
\end{equation}
where $A = [{a_1},{a_2},...,{a_n}] \in {\mathbf{R}^{d \times n}}$ is an overcomplete dictionary. Since problem \eqref{eq:lrr1} is non-convex and NP-hard, LRR uses the nuclear norm as a common surrogate for the rank function:
\begin{equation}\label{eq:lrr2}
\mathop {\min }\limits_{Z} {\left\| Z \right\|_*} \qquad s.t.\qquad X = AZ,
\end{equation}
where ${\left\| \cdot \right\|_*}$ denotes the nuclear norm (that is, the sum of the singular values in the matrix).

In the case of data grossly corrupted by noise or outliers, LRR solves the following convex optimization problem:
\begin{equation}\label{eq:lrr2}
\mathop {\min }\limits_{Z,E} {\left\| Z \right\|_*} + \lambda {\left\| E \right\|_l}\qquad s.t.\qquad X = AZ + E,
\end{equation}
where $\lambda > 0$ is a parameter to balance the effects of the low-rank representation and errors, and \({\left\|  \cdot  \right\|_l}\) indicates a certain regularization strategy for characterizing various corruptions. For instance, the ${l_{2,1}}$-norm characterizes the error term that encourages the columns of error matrix $E$ to be zero. LRR uses the actual data $X$ as the dictionary. The above optimization problem can be efficiently solved by the inexact augmented Lagrange multipliers (ALM) method \cite{Lin2010ALM}. A post-processing step involves using $Z$ to construct the affinity matrix as $\left| Z \right| + {\left| Z \right|^T}$, which is symmetric and entrywise nonnegative. The final data clustering result is obtained by applying spectral clustering to the affinity matrix.

\section{Symmetric low-rank representation}
\label{sec:SLRR}

In this section, we discuss the core of the proposed method, which is to learn a SLRR for subspace clustering. The SLRR was inspired by collaborative representation and low-rank representation techniques, which are used in classification and subspace clustering \cite{Liu2010LRR, Zhang2011SRCR, Lu2012LSR}. This proposed technique identifies clusters using the angular information of the principal directions of the symmetric low-rank representation, which preserves the low-rank subspace structures. In particular, we first analyze the symmetric low-rank property of high-dimensional data representation based on the symmetric low-rank optimization problem, which is closely related with the regularized least squares regression. Then, we attempt to find an alternative low-rank matrix instead of the original data combined with low-rank matrix recovery techniques to obtain a symmetric low-rank representation. We further give the equivalence analysis of optimal solutions between problem \eqref{eq:SLRR} and \eqref{eq:TheFinalSLRR}. Finally, we construct the affinity graph matrix for spectral clustering, which completes the procedure for the SLRR method.

\subsection{The symmetric low-rank representation model}
\label{sec:SLRRRLSR}

In the absence of noise, i.e., the samples are strictly drawn from multiple subspaces, several criteria are imposed on the optimization models to learn the representation of samples as an affinity matrix for spectral clustering to solve the subspace clustering problem exactly \cite{Elhamifar2013SSC, Liu2010LRR, Wang2011ESS, Lu2012LSR}. For example, SSC employs the sparsest representation using an ${l_1}$-norm regularization, while LRR seeks to learn the lowest-rank representation using a nuclear-norm regularization. Both of these techniques can realize an affinity matrix of the samples involving the block diagonal between-clusters property, which reveals the membership of subspaces with theoretical guarantees. By considering noise and corruption in real observations, the lowest-rank criterion shows promising robustness among these criteria by capturing the global structure of the samples.

As mentioned above, the lowest-rank criterion, such as LRR, typically requires calculating singular value decomposition iteratively. This means that it becomes inapplicable both in terms of computational complexity and memory storage when the dimensionality of the samples is extremely large. To alleviate these problems, we design a new criterion as the convex surrogate of the nuclear norm. It is worth noting that we are intersected in a symmetric low-rank representation of a given data set. But we are not interested in seeking the best low-rank matrix recovery and completion using the obtained symmetric low-rank representation. The proposed method differs from the LRR method in terms of its matrix recovery. In the case of noisy and corrupted data, we seek to find a symmetric low-rank representation using collaborative representation combined low-rank representation techniques. The optimization problem is as follows:

\begin{equation}\label{eq:SLRR1}
\begin{split}
& \mathop {\min }\limits_Z rank(Z) + \alpha \left\| {X - XZ} \right\|_F^2 + \frac{\lambda }{2}trace({Z^T}Z)
\\
& s.t. \quad X = XZ + E, Z = {Z^T}.
\end{split}
\end{equation}
The discrete nature of the rank function makes it difficult to solve Problem \eqref{eq:SLRR1}. Many researchers have instead used the nuclear norm to relax the optimization problem \cite{Liu2010LRR, Ni2010LRRPSD, Ni2010LRRPSD, Chen2014SC}. Unfortunately, these methods cannot completely avoid the need for iterative singular value decomposition (SVD) operations, which incur a significant computational cost. Unlike these LRR-based methods that solve the nuclear norm problem, the following convex optimization provides a good surrogate for problem \eqref{eq:SLRR1}:

\begin{equation}\label{eq:SLRR}
\begin{split}
& \mathop {\min }\limits_Z \left\| {X - XZ} \right\|_F^2 + \frac{\lambda }{2}trace({Z^T}Z)
\\
& s.t. \quad X = XZ + E, Z = {Z^T}, rank(Z) \le r,
\end{split}
\end{equation}
where ${\left\|  \cdot  \right\|_F}$ denotes the Frobenius norm of the matrix, $\lambda  > 0$ is a parameter used to balance the effects of the two parts, and $r \in {\rm N}$ is a parameter used to guarantee the low-rank representation $Z$. To maintain the weight consistency of each pair of data points, we impose a symmetric constraint on representation $Z$. Then, by imposing a low-rank constraint on representation $Z$, we obtain a desired symmetric low-rank representation.

To further analyze problem \eqref{eq:SLRR}, we first simplify this optimization problem. Removing the constraint $rank(Z) \le r$ from the problem leads to another optimization problem:

\begin{equation}\label{eq:NewSLRR}
\mathop {\min }\limits_Z \left\| {X - XZ} \right\|_F^2 + \frac{\lambda }{2}trace({Z^T}Z) \quad s.t. \quad X = XZ + E, Z = {Z^T}.
\end{equation}

The solution of SLRR in problem \eqref{eq:NewSLRR} can be analytically obtained

\begin{equation}
{Z^{\ast}} = {({X^T}X + \lambda \cdot I)^{ - 1}}{X^T}X
\end{equation}
where $I$ is the identity matrix, and $\lambda >0$ is a parameter.

Next, we show that $Z^{\ast}$ is symmetric.

\begin{theorem}
\label{theorem1}
The matrix
\[Z = {({X^T}X + \lambda  \cdot I)^{ - 1}}{X^T}X\]
is symmetric.
\end{theorem}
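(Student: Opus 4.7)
The plan is to reduce the symmetry claim to a short algebraic identity that eliminates the need to deal with $X^{T}X$ directly. Set $A=X^{T}X$. The key observations are that (i) $A$ is symmetric because $(X^{T}X)^{T}=X^{T}X$, and (ii) for $\lambda>0$ the matrix $A+\lambda I$ is symmetric positive definite, hence invertible, so $Z=(A+\lambda I)^{-1}A$ is well defined.

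The main step is to rewrite $A$ as $A=(A+\lambda I)-\lambda I$ and substitute into the formula for $Z$:
\begin{equation*}
Z=(A+\lambda I)^{-1}A=(A+\lambda I)^{-1}\bigl((A+\lambda I)-\lambda I\bigr)=I-\lambda(A+\lambda I)^{-1}.
\end{equation*}
From this expression the symmetry is immediate: $A+\lambda I$ is symmetric, so its inverse is symmetric, and $Z$ is the difference of two symmetric matrices, hence symmetric.

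Alternatively, one can compute $Z^{T}=A^{T}\bigl((A+\lambda I)^{-1}\bigr)^{T}=A(A+\lambda I)^{-1}$ and verify the commutation $A(A+\lambda I)^{-1}=(A+\lambda I)^{-1}A$, using the fact that $A$ commutes with $A+\lambda I$ and therefore with its inverse; but the rewriting above avoids invoking this commutation lemma and is the cleaner route.

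There is essentially no obstacle: the only subtlety is ensuring $A+\lambda I$ is invertible, which follows from $\lambda>0$ and the positive semidefiniteness of $X^{T}X$. No assumption on the rank or dimensions of $X$ is needed beyond what the preceding derivation has already used.
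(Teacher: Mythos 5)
Your proof is correct and uses the same key decomposition as the paper, namely $X^{T}X=(X^{T}X+\lambda I)-\lambda I$, which yields $Z=I-\lambda\,(X^{T}X+\lambda I)^{-1}$. The only cosmetic difference is that you conclude symmetry directly from the symmetry of $(X^{T}X+\lambda I)^{-1}$, whereas the paper runs the parallel computation for $Z^{T}$ and compares; you also make explicit the invertibility of $X^{T}X+\lambda I$ (positive definiteness for $\lambda>0$), which the paper leaves implicit.
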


\begin{proof}
Clearly,
\begin{equation*}
\begin{split}
Z & = {({X^T}X + \lambda  \cdot I)^{ - 1}}{X^T}X
\\
& = {({X^T}X + \lambda  \cdot I)^{ - 1}} \cdot ({X^T}X + \lambda  \cdot I - \lambda  \cdot I)
\\
& = I - \lambda  \cdot {({X^T}X + \lambda  \cdot I)^{ - 1}}.
\end{split}
\end{equation*}

On the other hand,
\begin{equation*}
\begin{split}
{Z^T} & = {X^T}X{({X^T}X + \lambda  \cdot I)^{ - 1}}
\\
& = ({X^T}X + \lambda  \cdot  I - \lambda  \cdot I) \cdot {({X^T}X + \lambda  \cdot I)^{ - 1}}
\\
& = I - \lambda  \cdot {({X^T}X + \lambda  \cdot I)^{ - 1}}.
\end{split}
\end{equation*}
Since $Z={Z^T}$, $Z$ is symmetric.
\end{proof}

It should be pointed out that ${Z^{\ast}}$ may not be a low-rank matrix. Denote the ranks of ${Z^{\ast}}$ and $X$ by $rank({Z^{\ast}})$ and $rank(X)$, respectively. It is easy to see that $rank({Z^{\ast}}) \le rank(X)$. As the noises we confront are ubiquitous in practice, $X$ is not a low-rank matrix. This implies that the real data may not strictly follow subspace structures because of noise or corruption.

In general, ${Z^{\ast}}$ is a low-rank matrix if $X$ is low-rank. If we require that ${\rm{rank}}(X) \le r$, where $r$ is some small positive integer, then ${Z^{\ast}}$ is a symmetric low-rank matrix. If we can use an alternative low-rank matrix $A$ to replace $X$, a desired low-rank solution could be obtained. We propose the following convex optimization provides a good surrogate for the problem \eqref{eq:SLRR}.
\begin{equation}\label{eq:TheFinalSLRR}
\begin{split}
& \mathop {\min }\limits_Z \left\| {A - AZ} \right\|_F^2 + \frac{\lambda }{2}trace({Z^T}Z)
\\
& s.t. \quad A = AZ + E, Z = {Z^T}, rank(Z) \le r.
\end{split}
\end{equation}
If $rank(A) \le r$, then
\begin{equation}
{Z^{\ast}} = {({A^T}A + \lambda \cdot I)^{ - 1}}{A^T}A
\end{equation}
is the analytical optimal solution to problem \eqref{eq:TheFinalSLRR}.

From linear algebra, ${Z^{\ast}}$ is a symmetric low-rank matrix if $A$ is low-rank. The only remaining issue is how to get an alternative low-rank matrix $A$ instead of $X$ from a given set of data.

\subsection{Pursuing an alternative low-rank matrix through low-rank matrix recovery techniques}
\label{sec:ALRM}

Given the assumption mentioned above, data points are approximately drawn from a union of subspaces. Each data point can be represented by a linear combination of the other data points. Therefore, it is reasonable that a low-rank matrix recovered from corrupt observations is employed instead of the original data in problem \eqref{eq:NewSLRR}. Here we consider in detail, three implementations of an alternative low-rank matrix from corrupt observations.

First, we explain the idea behind the first implementation. We incorporate low-rank matrix recovery techniques, the choice of which is closely related to the specific problem, into recovering corrupt samples. For example, it is well known that principal component analysis (PCA) is one of the most popular dimension reduction techniques for face images \cite{Jolliffe2002PCA}. PCA assumes that the data is drawn from a single low-dimensional subspace. In fact, our experiments demonstrate its effectiveness when applied to face clustering and motion segmentation. In particular, PCA learns a low-rank project matrix $P \in {\mathbf{R}^{m \times r}}$ by minimizing the following problem:
\begin{equation}
\mathop {\min }\limits_P \left\| {X - P{P^T}X} \right\|_F^2 \quad s.t. \quad {P^T}P = {I_r}.
\end{equation}
Let $A = P{P^T}X$. Note that $A$ is low-rank matrix recovery of $X$. If $rank(P) \le r$, a globally optimal solution ${Z^{\ast}} = {({A^T}A + \lambda \cdot I)^{ - 1}}{A^T}A$ of problem \eqref{eq:TheFinalSLRR} can be obtained in closed form. Obviously, it is a symmetric low-rank matrix. The low-rank matrix recovery reveals its vital importance in learning a low-rank representation.

It is well known that PCA is an effective method when the data are corrupted by Gaussian noise. However, its performance is limited in real applications by a lack of robustness to gross errors. The second implication for consideration is to recover a low-rank matrix from highly corrupted observations. For example, RPCA decomposes the data matrix $X$ into the sum of a low-rank approximation $A$ and an additive error $E$ \cite{Wright2009RPCA, Candes2011RobustPCA}, which leads to the following convex problem:
\begin{equation}
\mathop {\min }\limits_A {\left\| A \right\|_*} + \lambda {\left\| E \right\|_1} \quad s.t. \quad X = A + E.
\end{equation}
Assume that the optimal solution to this problem is ${A^{\ast}}$, where ${A^{\ast}}$, where ${A^{\ast}}$ is a low-rank matrix. If $rank({A^{\ast}}) \le r$, a globally optimal solution ${Z^{\ast}} = {({A^{\ast}}^T{A^{\ast}}+ \lambda \cdot I)^{ - 1}}{A^{\ast}}^T{A^{\ast}}$  can be obtained for problem \eqref{eq:TheFinalSLRR}.

Besides, we further consider incorporating feature extraction into the low-rank representation. We use low-rank features extracted from the corrupted samples instead of the original data by dimension reduction techniques. We also use the face clustering example to illustrate the importance and feasibility of feature extraction. Random features can be viewed as a less-structured face feature. Randomfaces are independent of the face images \cite{Kaski1998RM, Bingham2001RP}. A low-rank transform matrix $P \in {\mathbf{R}^{m \times r}}$, whose entries are independently sampled from a zero-mean normal distribution, is extremely efficient to generate, whose entries are independently sampled from zero-mean normal distribution. The random project (RP) matrix $P$ can be used to for dimension reduction for of face images. Let $A = {P^T}X$, where $A$ is an extracted feature matrix. A globally optimal solution ${Z^{\ast}} = {({A^T}A + \lambda \cdot I)^{ - 1}}{A^T}A$  to problem \eqref{eq:TheFinalSLRR} can also be obtained.

To examine the connection among the low-rank matrix recovery techniques reliant on dimension reduction, we consider the special case in which the low-rank projection matrix $P$ has orthogonal columns, i.e., ${P^T}P = I$. Assuming that ${P^T}P = I$, both of the implications, i.e., $A = P{P^T}X$ and $A = {P^T}X$, are equivalent to each other in problem \eqref{eq:TheFinalSLRR}. This is summarized by the following Lemma.

\begin{lemma}
\label{lemma1}
Let $D$, $U$, and $V$ be matrices of compatible dimensions. Suppose $U$ and $V$ have orthogonal columns, i.e., ${U^T}U = I$ and ${V^T}V = I$, then we have
\[{\left\| D \right\|_F} = {\left\| {UD{V^T}} \right\|_F}.\]
\end{lemma}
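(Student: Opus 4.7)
The plan is to reduce the claimed Frobenius-norm identity to a trace computation and then exploit the column-orthonormality hypotheses through the cyclic invariance of the trace. The natural starting point is the identity $\|M\|_F^2 = \operatorname{trace}(M^T M)$, applied to $M = U D V^T$.

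The steps I have in mind, in order, are as follows. First, expand $\|UDV^T\|_F^2 = \operatorname{trace}\bigl((UDV^T)^T (UDV^T)\bigr) = \operatorname{trace}(V D^T U^T U D V^T)$. Second, use the hypothesis $U^T U = I$ to collapse the inner factor, reducing the expression to $\operatorname{trace}(V D^T D V^T)$. Third, apply the cyclic property of the trace to bring $V^T$ next to $V$, yielding $\operatorname{trace}(D^T D V^T V)$, and then use $V^T V = I$ to conclude that this equals $\operatorname{trace}(D^T D) = \|D\|_F^2$. Taking positive square roots on both sides delivers the lemma.

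I do not expect a substantive obstacle, since the argument is standard linear algebra. The only mild subtlety worth flagging is that $U$ and $V$ need not be square: the hypotheses $U^T U = I$ and $V^T V = I$ only assert that the \emph{columns} of $U$ and $V$ are orthonormal, so the identity matrices appearing on the two sides of $U^T U = I$ and $V^T V = I$ may have different sizes. This does not affect the argument, because in each use above the identity that appears inside the trace is sandwiched in the correct position to be absorbed, and the cyclic permutation used in the third step remains valid for rectangular factors provided the product is well-defined. Consequently the proof should be only a few lines of display math.
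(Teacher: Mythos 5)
Your proof is correct and follows essentially the same route as the paper: expand $\|UDV^T\|_F^2$ as a trace, absorb $U^TU=I$, cyclically permute to absorb $V^TV=I$, and take square roots. Your remark on the rectangular case (the two identity matrices possibly having different sizes) is a small but valid point of care that the paper does not spell out.
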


\begin{proof}

By definition of the Frobenius norm, we have
\begin{equation*}
\begin{split}
 {\left\| {UD{V^T}} \right\|_F} & = \sqrt {trace({{(UD{V^T})}^T}(UD{V^T}))}
\\
& = \sqrt {trace(V{D^T}{U^T}UD{V^T})}.
\end{split}
\end{equation*}

As ${U^T}U = I$ and ${V^T}V = I$, we have
\begin{equation*}
\begin{split}
 {\left\| {UD{V^T}} \right\|_F} & = \sqrt {trace(V{D^T}D{V^T})}
\\
& = \sqrt {trace({V^T}V{D^T}D)}  = {\left\| D \right\|_F}.
\end{split}
\end{equation*}

\end{proof}

According to Lemma \ref{lemma1}, we can conclude that ${\left\| X \right\|_F} = {\left\| {PX} \right\|_F}$, where the low-rank project matrix $P$ has orthogonal columns. Consequently, $A = P{P^T}X$ or $A = {P^T}X$ are alternatives to obtain the same globally optimal solution of problem \eqref{eq:TheFinalSLRR}. The computational cost of the first implementation can be effectively reduced by using a simplified version if the low-rank project matrix has orthogonal columns.

The use of low-rank matrix recovery techniques to improve the performance of many applications is not in itself surprising. However, in this paper, the main purpose of using such techniques is to derive an alternative low-rank matrix that can be used to obtain the symmetric low-rank representation discussed above.

\subsection{Equivalence analysis of optimal solutions}
\label{sec:Equivalence}

In Section \ref{sec:SLRRRLSR}, we first introduced problem \eqref{eq:SLRR} to describe a symmetric low-rank representation model, and then considered this problem as the surrogate of an alternative low-rank matrix. We then analyzed the equivalence between problems (5) and (8) in terms of the optimal solution.

Let us first consider a specific case of low-rank matrix recovery techniques, such as PCA . In PCA, the low-rank projection matrix $P$ is an orthogonal matrix, i.e., ${P^T}P = I$. Then, problem \eqref{eq:SLRR} can be converted into an equivalent problem \eqref{eq:TheFinalSLRR} according to Lemma \ref{lemma1}. Consequently, the globally optimal solution of problem  \eqref{eq:TheFinalSLRR},  ${Z^{\ast}} = {({A^T}A + \lambda \cdot I)^{ - 1}}{A^T}A$, is the same as that of problem \eqref{eq:SLRR}. It is clear that ${Z^{\ast}}$ is a symmetric low-rank representation that preserves the multiple subspace structure.

Furthermore, we note the remaining cases of low-rank matrix recovery techniques, such as RP and RPCA . For example, the columns of the low-rank projection matrix   may not be orthogonal to one another, or an alternative low-rank matrix recovered from the original data may not be directly obtained using the low-rank projection matrix. Thus, we cannot calculate the globally optimal solution of problem \eqref{eq:SLRR} directly since its solution is intractable. To address this problem, we integrate an alternative low-rank matrix into problem \eqref{eq:TheFinalSLRR} to learn a symmetric low-rank representation. As mentioned above, problem \eqref{eq:TheFinalSLRR} can be solved as a closed form solution. It should be emphasized that this surrogate is reasonable for the following two reasons: (1) high-dimensional data often lie close to low-dimensional structures; and (2) the alternative matrix recovered from the original data has low rank. Such a symmetric low-rank representation can also preserve the multiple subspace structure.

\subsection{Construction of an affinity graph matrix for subspace clustering}
\label{sec:SM}

Using the symmetric low-rank matrix ${Z^{\ast}}$ from problem \eqref{eq:TheFinalSLRR}, we need to construct an affinity graph matrix $W$. We consider ${Z^{\ast}}$ with the skinny SVD ${U^{\ast}}{\sum ^{\ast}}{({V^{\ast}})^T}$, and define $M = {U^{\ast}}{({\sum ^{\ast}})^{{1 \mathord{\left/
 {\vphantom {1 2}} \right.
 \kern-\nulldelimiterspace} 2}}},N = {({\sum ^{\ast}})^{{1 \mathord{\left/
 {\vphantom {1 2}} \right.
 \kern-\nulldelimiterspace} 2}}}{({V^{\ast}})^T}$. As suggested in \cite{Chen2014SC}, we apply the mechanism of driving the construction of the affinity graph from matrix ${Z^{\ast}}$. This considers the angular information from all row vectors of matrix $M$ or all column vectors of matrix $N$ to define an affinity graph matrix as follows:
 \begin{equation}
{[W]_{ij}} = {\left( {\frac{{m_i^T{m_j}}}{{{{\left\| {{m_i}} \right\|}_2}{{\left\| {{m_j}} \right\|}_2}}}} \right)^{2\alpha }}
\quad or \quad
{[W]_{ij}} = {\left( {\frac{{n_i^T{n_j}}}{{{{\left\| {{n_i}} \right\|}_2}{{\left\| {{n_j}} \right\|}_2}}}} \right)^{2\alpha }},
\end{equation}
where ${m_i}$ and ${m_j}$ represent the $i$-th and $j$-th rows of matrix $M$, and ${n_i}$ and ${n_j}$ represent the $i$-th and $j$-th columns of matrix $N$, respectively, and $\alpha  \in {\rm N}$ is a parameter to adjust the sharpness of the affinity between different clusters. Algorithm \ref{alg:SLRR} summarizes the complete subspace clustering algorithm for SLRR.

\begin{algorithm}[!htbp]
\renewcommand{\algorithmicrequire}{\textbf{Input:}}
\renewcommand\algorithmicensure {\textbf{Output:} }
\caption{The SLRR algorithm}
\label{alg:SLRR}
\begin{algorithmic}[1]
\REQUIRE ~~\\
data matrix $X = [{x_1},{x_2},...,{x_n}] \in {\mathbf{R}^{m \times n}}$, number of subspaces $k$, regularized parameters $\lambda > 0, \alpha \in {\rm N}, r \in {\rm N}$

\STATE Recover an alternative low-rank matrix $A$ from $X$ using low-rank matrix recovery techniques such as RPCA. Alternatively, learn the low-rank projection $P$ from $X$ using low-rank matrix recovery techniques, and then obtain an alternative low-rank matrix $A = {P^T}PX$, or an alternative feature matrix $A = PX$.

\STATE Solve the following problem:

\begin{equation*}
\begin{split}
& \mathop {\min }\limits_Z \left\| {A - AZ} \right\|_F^2 + \frac{\lambda }{2}trace({Z^T}Z)
\\
& s.t. \quad A = AZ + E, Z = {Z^T}, rank(Z) \le r.
\end{split}
\end{equation*}

and obtain the optimal solution ${Z^{\ast}} = {({A^T}A + \lambda \cdot I)^{ - 1}}{A^T}A$.

\STATE Compute the skinny SVD ${Z^{\ast}}={U^{\ast}}{\sum ^{\ast}}{({V^{\ast}})^T}$.
\STATE Calculate $M = {U^{\ast}}{({\sum ^{\ast}})^{{1 \mathord{\left/ {\vphantom {1 2}} \right. \kern-\nulldelimiterspace} 2}}}$ or $N = {({\sum ^{\ast}})^{{1 \mathord{\left/ {\vphantom {1 2}} \right. \kern-\nulldelimiterspace} 2}}}{({V^{\ast}})^T}$.
\STATE Construct the affinity graph matrix $W$, i.e.,
\begin{equation*}
{[W]_{ij}} = {\left( {\frac{{m_i^T{m_j}}}{{{{\left\| {{m_i}} \right\|}_2}{{\left\| {{m_j}} \right\|}_2}}}} \right)^{2\alpha }}
\quad or \quad
{[W]_{ij}} = {\left( {\frac{{n_i^T{n_j}}}{{{{\left\| {{n_i}} \right\|}_2}{{\left\| {{n_j}} \right\|}_2}}}} \right)^{2\alpha }}.
\end{equation*}
\STATE Apply $W$ to perform NCuts.

\ENSURE ~~\\ The clustering results.
\end{algorithmic}
\end{algorithm}

Assume that the size of $X$ is $m \times n$, where $X$ has $n$ samples and each sample has $m$ dimensions. For convenience, we apply PCA as an example low-rank matrix recovery technique to illustrate the computational complexity of Algorithm \ref{alg:SLRR}. Thus, the computational complexity of the first two steps in Algorithm \ref{alg:SLRR} is $O({m^2}n + {m^3})$, while the computational complexity of the last four steps in Algorithm \ref{alg:SLRR} is $O(m{n^2} + {n^3})$. The complexity of Algorithm \ref{alg:SLRR} is $O({{m^2}n + m{n^2} + {m^3} + n^3})$. If $m \ll n$, the overall complexity of Algorithm \ref{alg:SLRR} is $O({n^3})$.

\section{Experiments}
\label{sec:Experiments}

\subsection{Experimental settings}

\begin{figure*}[!htbp]
\centering
\subfigure[The original sample images]{
\label{fig:yaleba}
\includegraphics[width=1\textwidth]{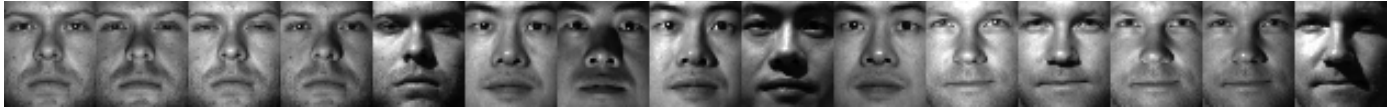}}
\subfigure[The corrupted sample images with the 20\% random pixel corruptions]{
\label{fig:yalebb}
\includegraphics[width=1\textwidth]{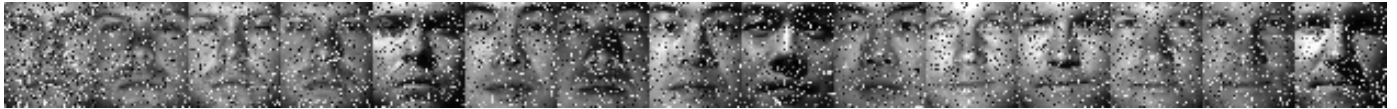}}
\subfigure[The corrected sample images by applying RPCA]{
\label{fig:yalebc}
\includegraphics[width=1\textwidth]{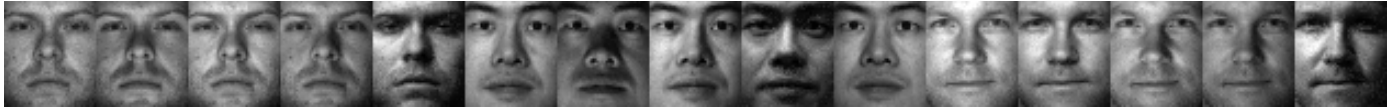}}
\label{fig:yaleb}
\caption{Example images of multiple individuals from the Extended Yale B database.}
\end{figure*}

\subsubsection{Databases}

To evaluate the SLRR, we performed different experiments on two popular benchmark databases, e.g., the extended Yale B and Hopkins 155 databases. The statistics of the two databases are summarized below.

\begin{enumerate}[$\bullet$]
\item Extended Yale B database \cite{Lee05YALEB, GeBeKr01YALEB}. This database contains 2414 frontal images of 38 individuals, with images of each individual lying in a low-dimensional subspace. There are around $59-64$ images available for each individual. To reduce the computational time and memory requirements of algorithms, we used a normalized face image with size $48 \times 42$ pixels in the experiments. Figure \ref{fig:yaleba} shows some example face images from the Extended Yale B Database.
\item Hopkins 155 database \cite{Hopkins155}. This database consists of 156 video sequences of two or three motions. Each video sequence motion corresponds to a low-dimensional subspace. There are $39-550$ data vectors drawn from two or three motions for each video sequence. Figure \ref{fig:h155example} shows some example frames from four video sequences with traced feature points.
\end{enumerate}

\subsubsection{Baselines and evaluation}

To investigate the efficiency and robustness of the proposed method, we compared the performance of SLRR with several state-of-the-art subspace clustering algorithms, such as LRR \cite{Liu2010LRR}, LRRSC \cite{Chen2014SC}, SSC \cite{Elhamifar2013SSC}, local subspace affinity (LSA) \cite{Yang2006MotionSeg}, and low rank subspace clustering (LRSC) \cite{Vidala2013LRSC}. For the state-of-the-art algorithms, we used the source code provided by the respective authors. The Matlab source code for our method is available online at http://www.machineilab.org/users/chenjie.

The subspace clustering error is the percentage of misclassified samples over all samples, which is measured as
\begin{equation}
error = \frac{{{N_{error}}}}{{{N_{total}}}},
\end{equation}
where ${{N_{error}}}$ denotes the number of misclassified samples, and ${{N_{total}}}$ is the total number of samples. For LRR, we reported the results after post-processing of the affinity graph. Moreover, we chose the noisy data version of $({P_3})$ of LRSC to show its results. All experiments were implemented on Matlab R2011b and performed on a personal computer with an Intel Core i5-2300 CPU and 16 GB memory.

\subsubsection{Parameter settings}

\begin{table}[!htbp]
\small
\setlength{\abovecaptionskip}{0pt}
\setlength{\belowcaptionskip}{10pt}
\setlength{\tabcolsep}{1pt}
\centering
\caption{Parameter settings for different algorithms on face clustering. For SLRR, $n$ is the number of subspaces, i.e., the number of subjects. Let SLRR${_{PCA}}$, SLRR${_{RPCA}}$ and SLRR${_{RP}}$ denote the application of PCA, RPCA, and RP for low-rank matrix recovery or dimension reduction in SLRR.}

\label{parmaterofface}
\begin{tabular}{|c|c|c|}
\hline
\multirow{2}{*}{Method} & \multicolumn{2}{c|}{Face clustering} \\
\cline{2-3}
 & Scenario 1 & Scenario 2 \\
\hline
  SLRR${_{PCA}}$ & $\alpha = 3, \lambda = 30, r = 50n$ & $\alpha = 2, \lambda = 40, r = 10n$  \\
\hline
  SLRR${_{RP}}$ & $\alpha = 3, \lambda = 1.2, r = 10n $ & $\alpha = 3, \lambda = 1, r = 10n $ \\
\hline
  SLRR${_{RPCA}}$ & $\alpha = 2, \lambda = 3, \lambda{_{RPCA}} \in  [0.02, 0.03] $ & - \\
\hline
 LRRSC & $ \lambda = 0.2, \alpha = 4$ & $ \lambda = 0.1,\alpha = 3$ \\
\hline
  LRR & \multicolumn{2}{c|}{$ \lambda = 0.18 $} \\
\hline
  SSC & ${\lambda _e} = {{8} \mathord{\left/
 {\vphantom {{800} {{\mu _e}}}} \right.
 \kern-\nulldelimiterspace} {{\mu _e}}}$ & ${\lambda _e} = {{20} \mathord{\left/
 {\vphantom {{800} {{\mu _e}}}} \right.
 \kern-\nulldelimiterspace} {{\mu _e}}}$ \\
\hline
  LSA & \multicolumn{2}{c|}{$K=3, d=5$} \\
\hline
  LRSC & $\tau  = 0.4,\alpha  = 0.045$ &  $\tau  = 0.045,\alpha  = 0.045$  \\
\hline
\end{tabular}
\end{table}

\begin{table}[!htbp]
\small
\setlength{\abovecaptionskip}{0pt}
\setlength{\belowcaptionskip}{10pt}
\centering
\caption{Parameter settings for different algorithms on motion segmentation. For SLRR, $n$ represents the number of motions in each video sequence.}
\label{parmaterofmotion}
\begin{tabular}{|c|c|c|c|c|}
\hline
\multirow{2}{*}{Method} & \multicolumn{2}{c|}{Motion segmentation} \\
\cline{2-3}
 & Scenario 1 & Scenario 2 \\
\hline
  SLRR & $\alpha = 2, \lambda = 5{e^{ - 3}}$ & {$ \alpha = 2, \lambda = 5{e^{ - 3}}, r=4n $} \\
\hline
 LRRSC & $\lambda = 3.3, \alpha = 2$ & $\lambda = 3, \alpha = 3$\\
\hline
  LRR & \multicolumn{2}{c|}{$ \lambda = 4 $} \\
\hline
  SSC & \multicolumn{2}{c|}{${\lambda _z} = {{800} \mathord{\left/
 {\vphantom {{800} {{\mu _z}}}} \right.
 \kern-\nulldelimiterspace} {{\mu _z}}}$} \\
\hline
  LSA & $K=8, d=5$   & $K=8, d=4$  \\
\hline
  LRSC & \multicolumn{2}{c|}{$\tau  = 420,\alpha  = 3000\,or\,\alpha  = 5000$ } \\
\hline
\end{tabular}
\end{table}

To obtain the best results of the state-of-the-art algorithms in the experiments, we either applied the optimal parameters for each method as given by the respective authors, or manually tuned the parameters of each method. We emphasize that SLRR is follow-up research based on our previous work, i.e., LRRSC \cite{Chen2014SC}. Hence, we reported the parameter settings and results of several algorithms from \cite{Chen2014SC}, e.g., LRR, LRRSC, SSC, LSA and LRSC, for comparison with the results of SLRR in our experiments. The parameters for these methods are set as shown in Tables \ref{parmaterofface} and \ref{parmaterofmotion}.

According to problem \eqref{eq:TheFinalSLRR}, SLRR has three parameters: $\lambda$, $\alpha$ and $r$. Empirically speaking, parameter $\lambda$ should be relatively large if the data are slightly contaminated by noise, and vice versa. In other words, parameter $\lambda$ is usually dependent on the prior of the error level of data. In fact, parameter $\lambda$ has a wide range in our experiments. Parameter $\alpha$ ranges from 2 to 4. To pursue an alternative low-rank matrix, parameter $r$ may be closely related with the intrinsic dimension of high-dimensional data. For example, images of an individual with a fixed pose and varying illumination lie close to a 9-dimensional linear subspace under the Lambertian assumption \cite{Basri2003}. Besides, the tracked feature point trajectories from a single motion lie in a linear subspace of at most four dimensions \cite{Boult1991Factor}. Therefore, we used $r=10n$ in some face clustering experiments shown in Table \ref{parmaterofface}, and $r=4n$ for the motion segmentation experiments shown in Table \ref{parmaterofmotion}, where $n$ denotes the number of subspaces. Note that we used $r=50n$ in SLRR${_{PCA}}$ for the first scenario of face clustering. However, using $r=10n$ for SLRR${_{PCA}}$ also achieves satisfactory performance in the face clustering experiments. Further results and discussions of the parameters are given in the respective sections for the experiments.

\subsection{Experiments on face clustering}

We first evaluated the clustering performance of SLRR as well as the other methods on the Extended Yale B database. Face clustering refers to the problem of clustering face images from multiple individuals according to each individual. The face images of the individual, captured under various laboratory-controlled lighting conditions, can be well approximated by a low-dimensional subspace \cite{Basri2003}. Therefore, the problem of clustering face images reduces to clustering a collection of images according to multiple subspaces. We considered two different clustering scenarios of face images to evaluate the performance of the proposed SLRR.

\subsubsection{First scenario for face clustering}

Following the experimental settings in \cite{Liu2010LRR1}, we chose a subset of the Extended Yale B database consisting of the 640 frontal face images from the first 10 subjects. We used two different low-rank matrix recovery techniques (PCA, RPCA) and one dimension reduction technique (RP) to implement SLRR for face clustering.

\begin{figure*}[!htbp]
\begin{minipage}[t]{0.24\linewidth}
\centering
\subfigure[$r=50$]{
\label{fig:facebypca:a} 
\includegraphics[width=1\textwidth]{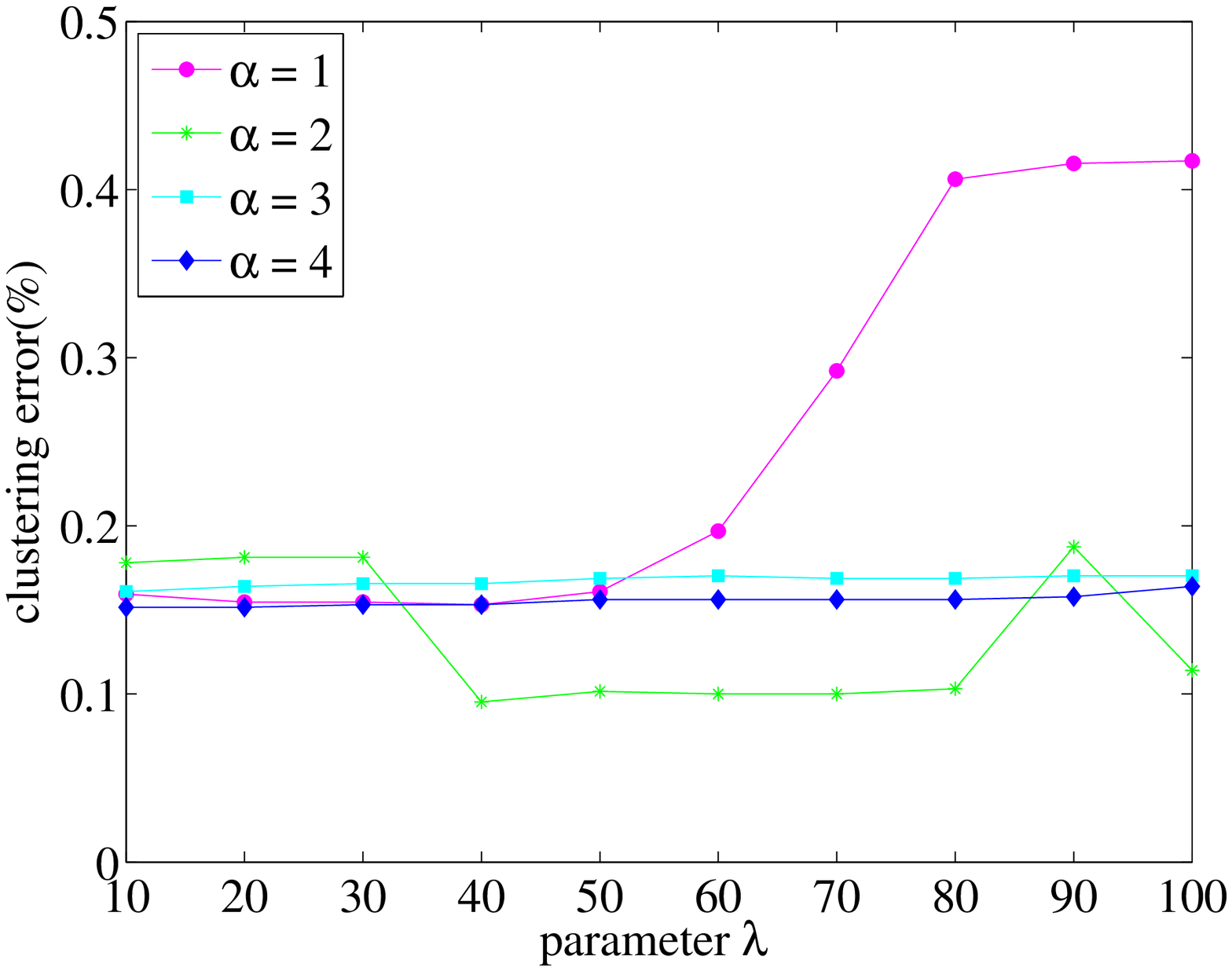}}
\end{minipage}%
\begin{minipage}[t]{0.24\linewidth}
\centering
\subfigure[$r=100$]{
\label{fig:facebypca:b} 
\includegraphics[width=1\textwidth]{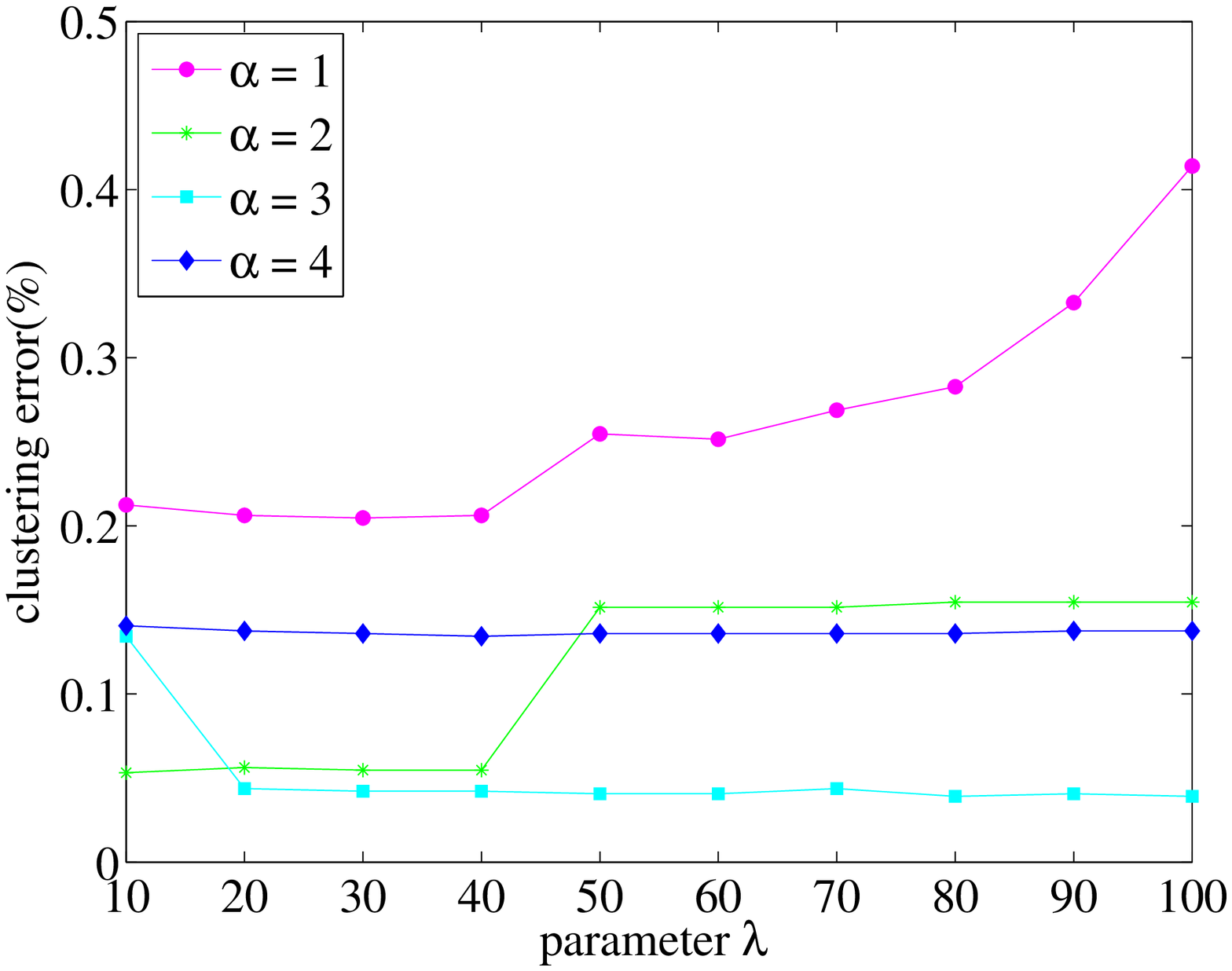}}
\end{minipage}%
\begin{minipage}[t]{0.24\linewidth}
\centering
\subfigure[$r=200$]{
\label{fig:facebypca:c} 
\includegraphics[width=1\textwidth]{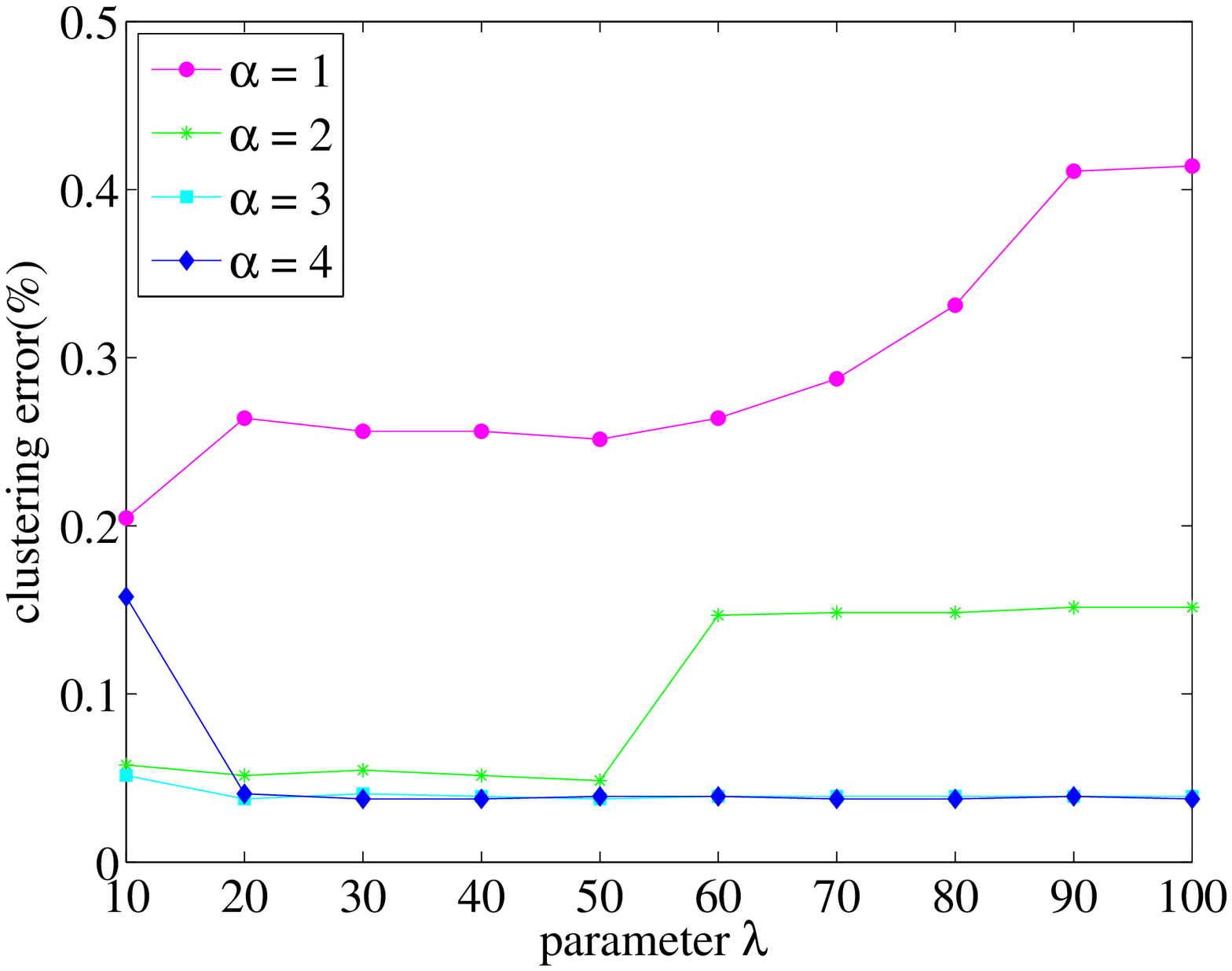}}
\end{minipage}%
\begin{minipage}[t]{0.24\linewidth}
\centering
\subfigure[r=500]{
\label{fig:facebypca:d} 
\includegraphics[width=1\textwidth]{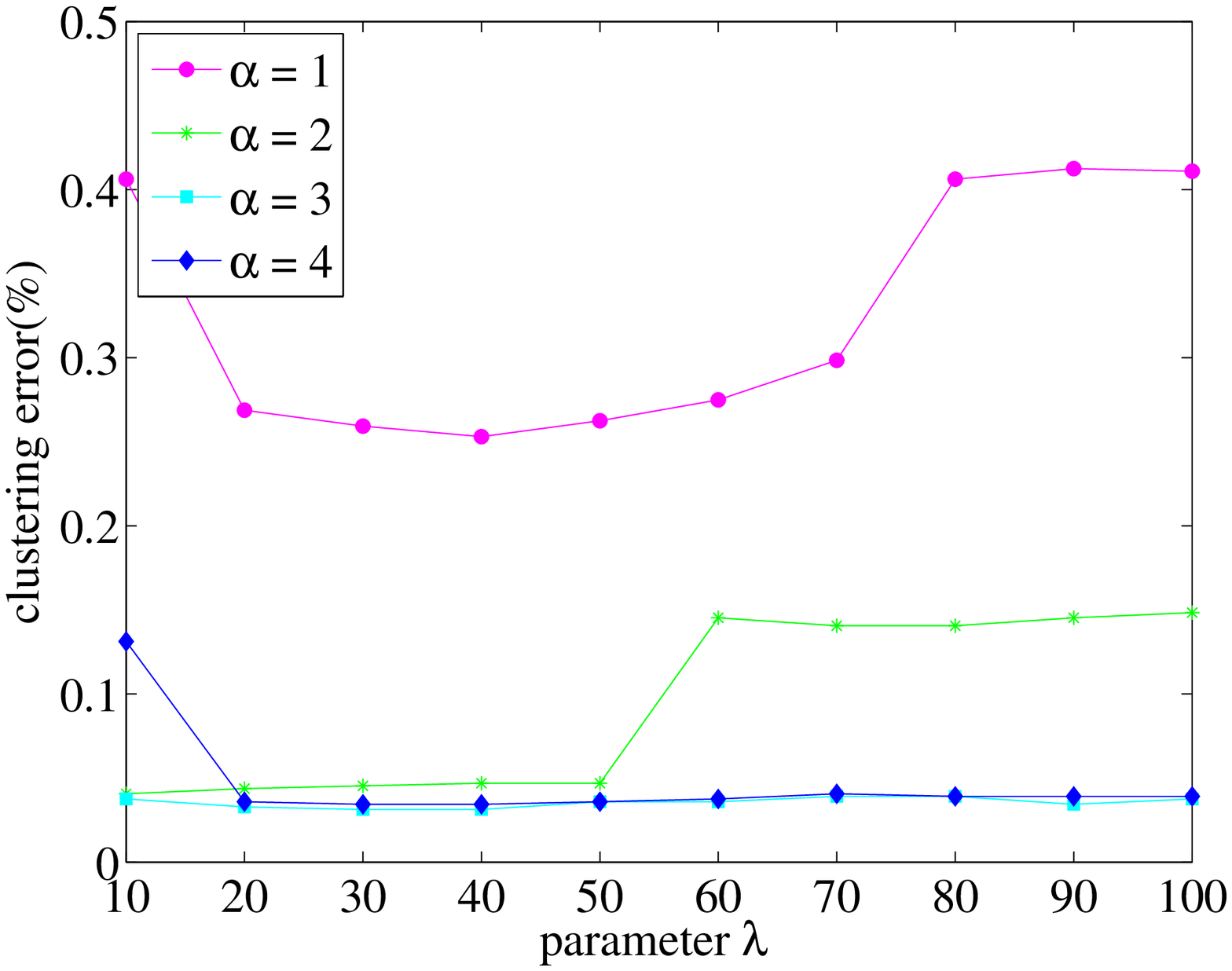}}
\end{minipage}%
\caption{Changes in clustering error when varying $\lambda$ and $\alpha$ under different $r$, by applying PCA on face images of the first 10 classes in the Extended Yale Database B.}
\label{fig:facebypca} 
\end{figure*}

\begin{figure*}[!htbp]
\begin{minipage}[t]{0.24\linewidth}
\centering
\subfigure[$r=50$]{
\label{fig:facebyrp:a} 
\includegraphics[width=1\textwidth]{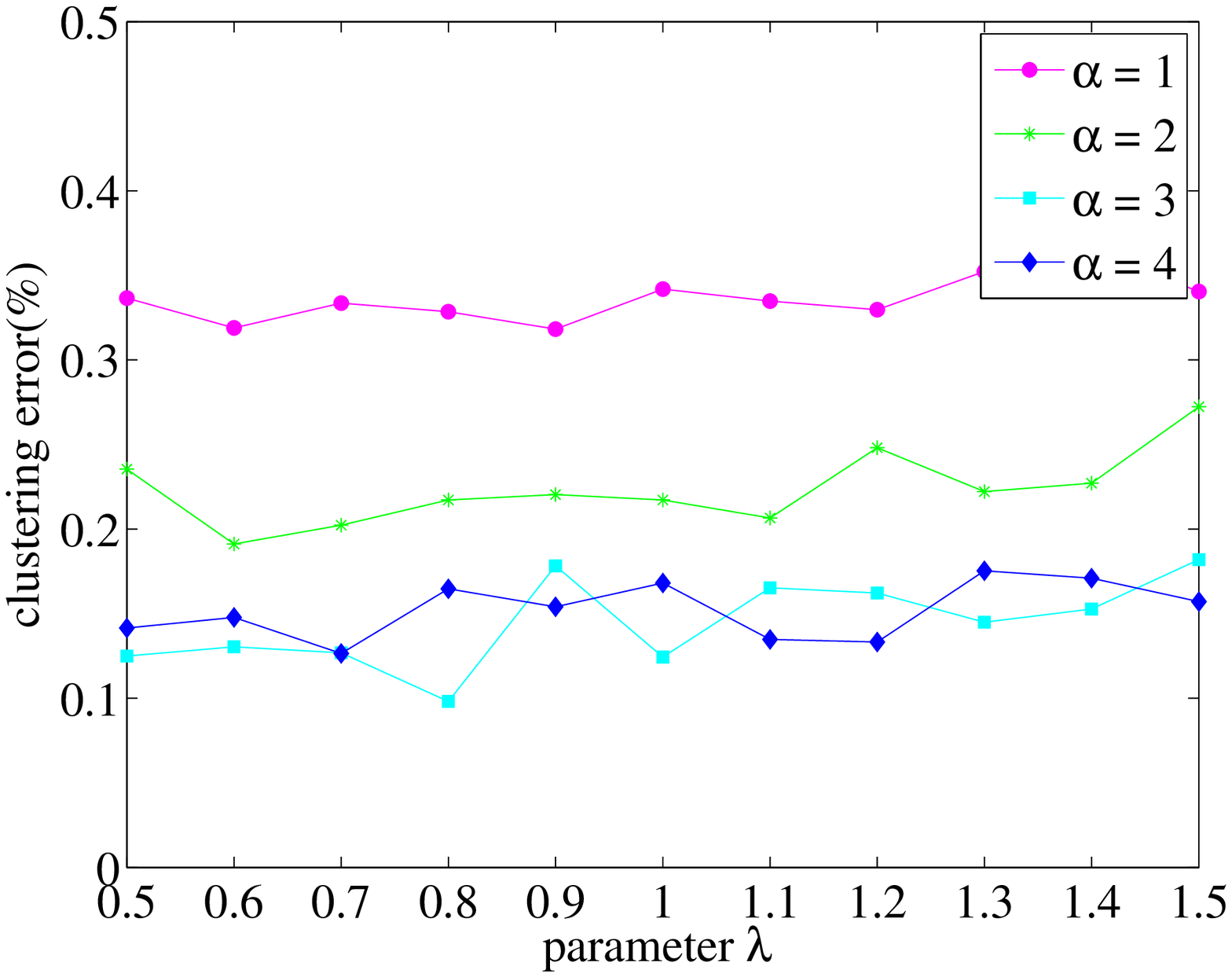}}
\end{minipage}%
\begin{minipage}[t]{0.24\linewidth}
\centering
\subfigure[$r=90$]{
\label{fig:facebyrp:b} 
\includegraphics[width=1\textwidth]{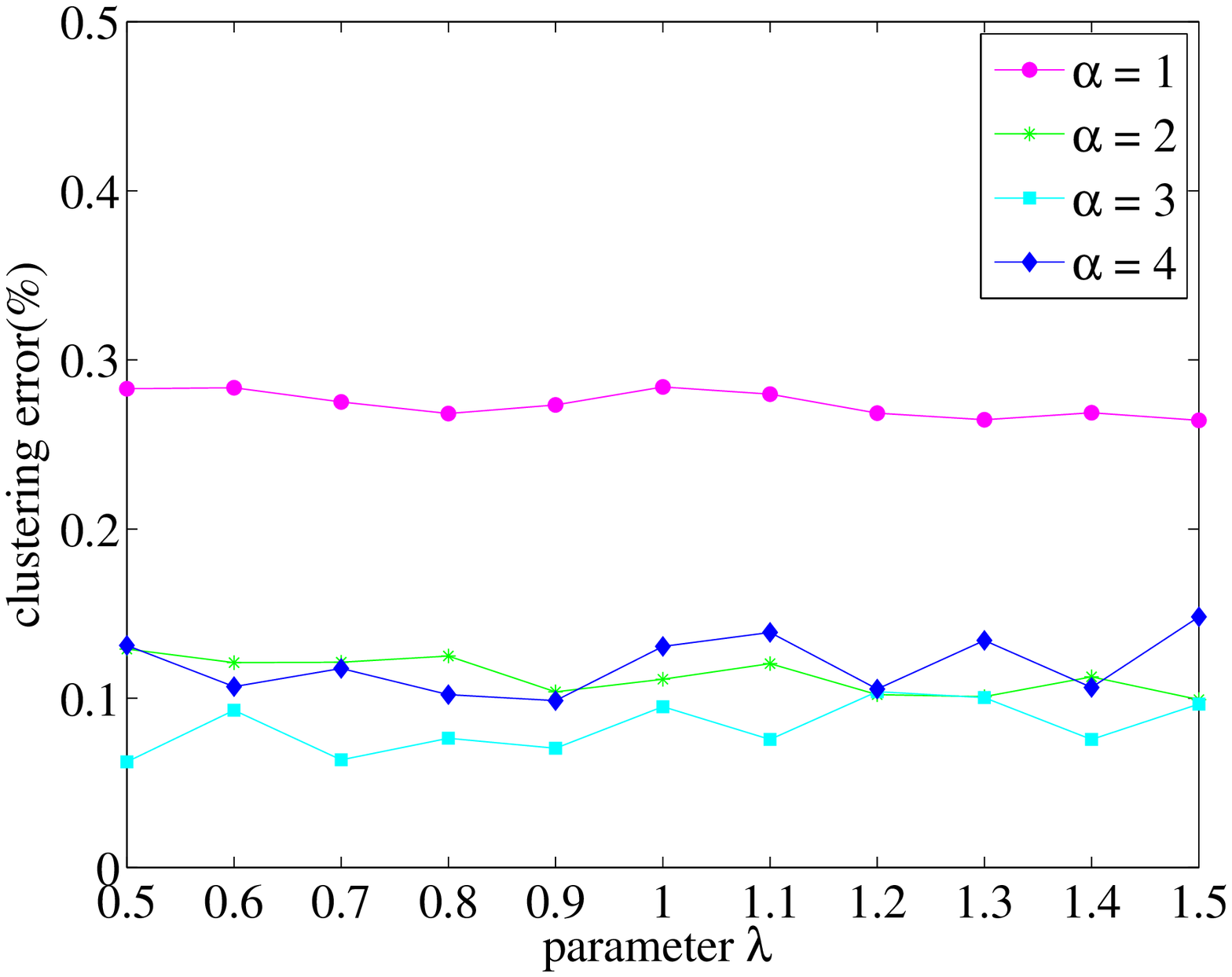}}
\end{minipage}%
\begin{minipage}[t]{0.24\linewidth}
\centering
\subfigure[$r=100$]{
\label{fig:facebyrp:c} 
\includegraphics[width=1\textwidth]{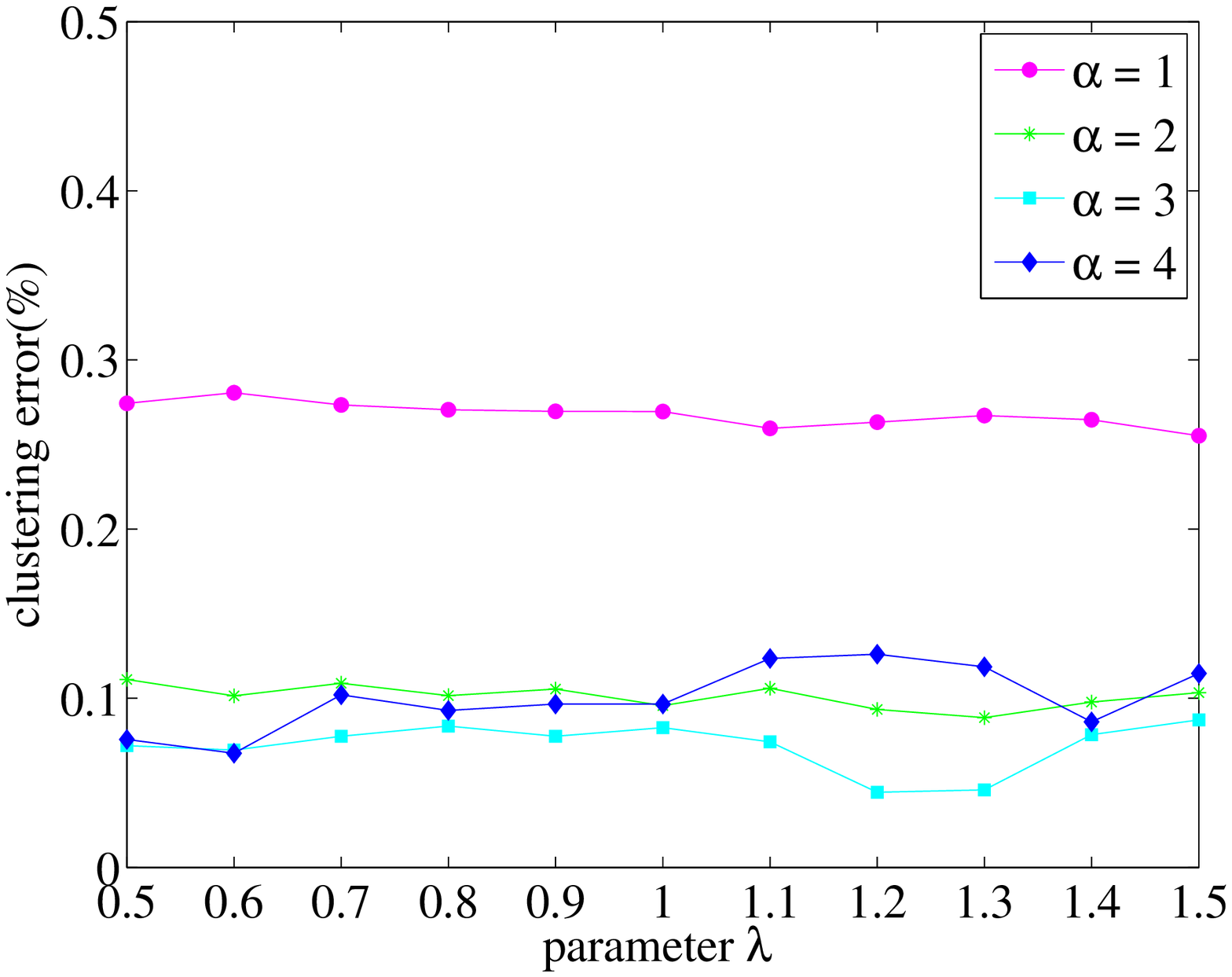}}
\end{minipage}%
\begin{minipage}[t]{0.24\linewidth}
\centering
\subfigure[$r=150$]{
\label{fig:facebyrp:d} 
\includegraphics[width=1\textwidth]{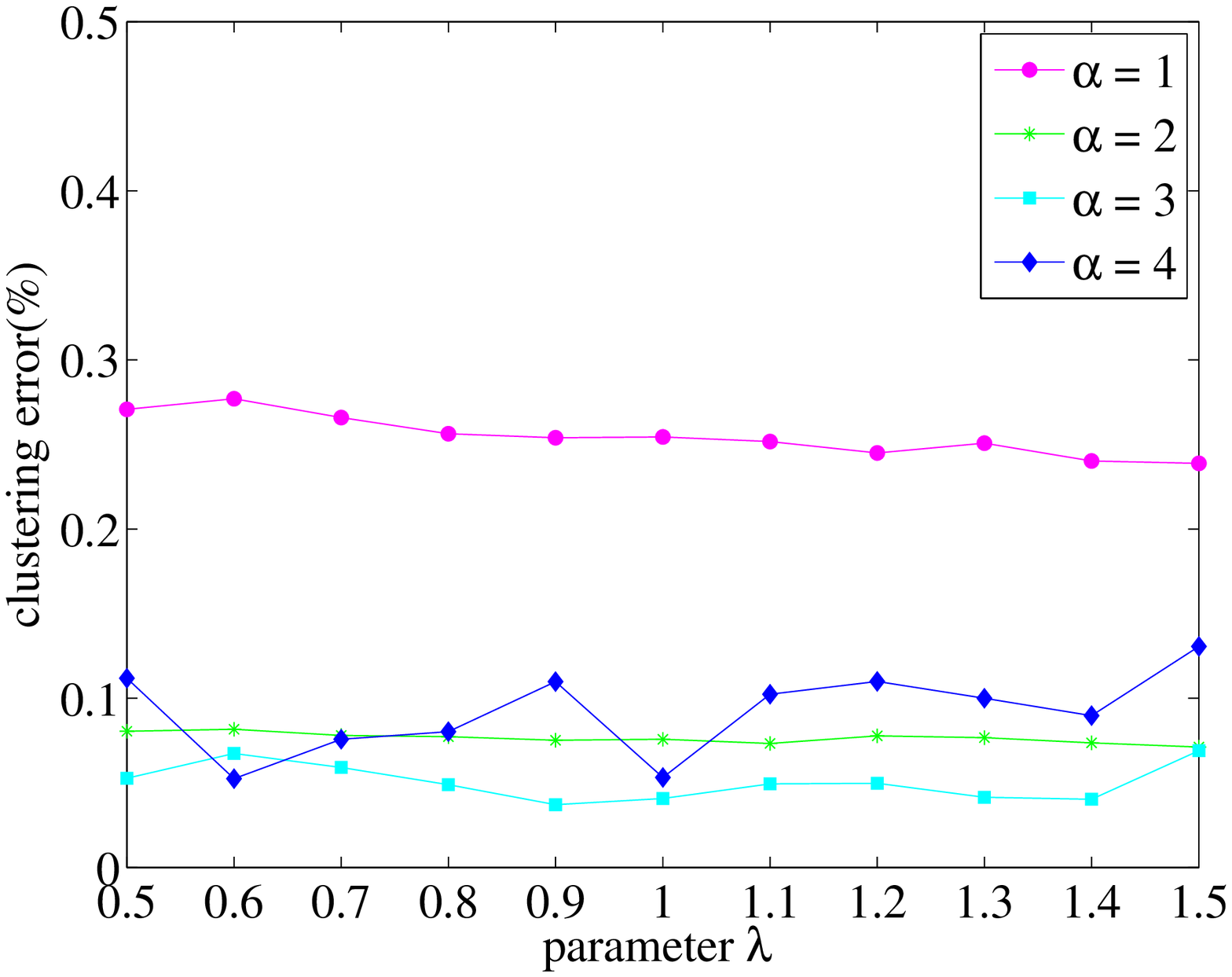}}
\end{minipage}%
\caption{Changes in clustering error when varying $\lambda$ and $\alpha$ under different $r$, by applying RP on face images of the first 10 classes in the Extended Yale Database B.}
\label{fig:facebyrp} 
\end{figure*}

We first examined the performance of these algorithms on the original data. Figures \ref{fig:facebypca} and \ref{fig:facebyrp} show the influence of parameters $\lambda$ and $\alpha$ for different $r$ values on the clustering errors of SLRR. Note that $r$ is the value of the reduced dimension after applying PCA or RP. Because the random projection matrix used in SLRR${_{RP}}$ is generated randomly, ten different random projection matrices are employed in the performance evaluation. The final clustering performance of SLRR${_{RP}}$ is computed by averaging the clustering error rates from these ten experiments. According to the Lambertian assumption mentioned above, the optimal value of $r$ is around 90 because of the 10 different subjects in the experiment. As shown in Figs. \ref{fig:facebypca:a} and \ref{fig:facebyrp:a}, a value of $r$ equal to $50$ results in inferior clustering performance. This implies that the reduced dimension information of face images, whose dimension is much less than the intrinsic dimension of face images, is not sufficient for low-rank representation to separate data from different subspaces. In contrast to the reduced dimension of the face images, a value of $r$ equal to or greater than $90$ leads to a significant performance improvement as shown in Figs.  \ref{fig:facebypca:b}$-$\ref{fig:facebypca:d} and \ref{fig:facebyrp:b}-\ref{fig:facebypca:d}. Therefore, parameter $r$ is closely related to the intrinsic dimension of high-dimensional data. In addition, SLRR${_{PCA}}$ and SLRR${_{RP}}$ seem to achieve better performance as $\alpha$ increases. For example, the clustering error of SLRR${_{PCA}}$ varies from $3.91\%$ to $4.38\%$ when $\lambda$ ranges from 20 to 100 with $\alpha=3$ in Fig. \ref{fig:facebypca:b}. On the contrary, the clustering error of SLRR${_{PCA}}$ varies from $20.47\%$ to $41.41\%$ when $\lambda$ ranges from 20 to 100 with $\alpha=1$ in Fig. \ref{fig:facebypca:b}. These comparisons can also be observed in Figs. \ref{fig:facebypca:c} $-$ \ref{fig:facebypca:d} and \ref{fig:facebyrp:b}$-$\ref{fig:facebypca:d}. However, SLRR${_{PCA}}$ and SLRR${_{RP}}$ cannot further improve the performance if $\alpha$ is too large (e.g., with $\alpha=4$ in Figs. \ref{fig:facebypca:b} $-$ \ref{fig:facebypca:d} and \ref{fig:facebyrp:b}$-$\ref{fig:facebypca:d}).

Table \ref{tb:face1} shows the face clustering results and computational cost of the different algorithms in the first experimental scenario. SLRR${_{PCA}}$ has better clustering performance and lower computational cost than the other algorithms. For example, the clustering errors of SLRR${_{PCA}}$ and SLRR${_{RP}}$ are 3.13\% and 4.44\%, respectively. SLRR${_{PCA}}$ improved the clustering accuracy by nearly 18\% compared with LRR. The improvement of SLRR${_{PCA}}$ and SLRR${_{RP}}$ indicates the importance of symmetric low-rank representation of high-dimensional data in the construction of the affinity graph matrix. From Table \ref{tb:face1}, it is clear that SLRR${_{PCA}}$, SLRR${_{RP}}$ and LRSC execute much faster than the other approaches. This is because they obtain a closed form solution of the low-rank representation on their corresponding optimization problems. SSC solves the ${l_1}$-norm minimization problem, while the optimization of LRR by inexact ALM requires hundreds of SVD computations before convergence. Hence, both of these incur a high computational cost. In SLRR${_{PCA}}$ and SLRR${_{RP}}$, collaborative representation with low-rank matrix recovery techniques into low-rank representation exhibits its efficiency by making use of the self-expressiveness property of the data.

\begin{table}[!htbp]
\small
\setlength{\abovecaptionskip}{0pt}
\setlength{\belowcaptionskip}{10pt}
\setlength{\tabcolsep}{1pt}
\centering
\caption{Clustering error (\%) and computation time (seconds) by applying different algorithms on the first 10 classes of the Extended Yale Database B.}
\label{tb:face1}
\begin{tabular}{|c|ccccccc|}
\hline
Algorithm & SLRR${_{PCA}}$ & SLRR${_{RP}}$ & LRRSC &  LRR  & SSC & LSA & LRSC \\
\hline
  error & \textbf{3.13} & 4.44 & 3.91 & 20.94 & 35 & 59.52 & 35.78 \\
\hline
  time & 35.26 & \textbf{34.81} & 115.63 & 103.66 & 54.06 & 91.51 & 35.29 \\
\hline
\end{tabular}
\end{table}

Finally, we explored the performance and robustness of these algorithms on a more challenging set of face images. Four artificial pixel corruption levels (10\%, 20\%, 30\%, and 40\%) were selected for the face images, and the locations of corrupted pixels were chosen randomly. To corrupt any chosen location, its observed value was replaced by a random number in the range [0, 1]. Some examples with 20\% pixel occlusions and their corrections are shown in Figures \ref{fig:yalebb} and \ref{fig:yalebc}, respectively. For a fair comparison, we applied RPCA to the corrupted face images for the other competing algorithms, where the RPCA parameter $\lambda$ ranged from 0.025 to 0.05. All experiments were repeated 10 times. Table \ref{tb:face11} shows the average clustering error. The results demonstrate that SLRR achieves a consistently high clustering accuracy when artificial pixel corruptions are relatively sparse, i.e., corruption percentages of 10\% and 20\%. As expected, the performance of SLRR deteriorates as the percentage of corruption increases. At corruption percentages of 30\% and 40\%, LRRSC obtains the highest clustering accuracy. The performance of SLRR degrades because the errors are no longer sparse during the low-rank matrix recovery algorithm, i.e., RPCA. LRR-based methods, such as SLRR, LRRSC, LRR, and LRSC perform better than the competing methods in all scenarios. This further highlights the benefit of estimating the underlying subspaces using the low-rank criterion. Compared with the other competing methods, SLRR and LRRSC are slightly more stable when the given data is corrupted by gross errors.

\begin{table}[!htbp]
\small
\setlength{\abovecaptionskip}{0pt}
\setlength{\belowcaptionskip}{10pt}
\setlength{\tabcolsep}{1pt}
\centering
\caption{Clustering error (\%) by applying different algorithms on the first 10 classes of the Extended Yale Database B with four artificial pixel corruption levels.}
\label{tb:face11}
\begin{tabular}{|c|cccccc|}
\hline
Corruption ratio (\%) & SLRR${_{RPCA}}$ & LRRSC &  LRR  & SSC & LSA & LRSC \\
\hline
  10 & \textbf{9.23} & 12.16 & 21.38 & 32.84 & 60.86 & 16.22 \\
\hline
  20 & \textbf{10.34} & 12.25 & 24.77 & 39.44 & 62.89 & 17.47 \\
\hline
  30 & 13.69 & \textbf{12.79} & 30.44 & 43.84 & 61.58 & 17.2 \\
\hline
  40 & 14.59 & \textbf{12.23} & 31.72 & 48.95 & 64.98 & 20.72 \\
\hline
\end{tabular}
\end{table}

\subsubsection{Second scenario for face clustering}

We used the experimental settings from \cite{Elhamifar2013SSC}. The 38 subjects were divided into four groups as follows: subjects 1 to 10, 11 to 20, 21 to 30, and 31 to 38 corresponding to the four different groups. All choices of $n \in \{ 2,3,5,8,10\}$ were considered for each of the first three groups, and all choices of $n \in \{ 2,3,5,8\} $ were considered for the last group. Finally, we applied each algorithm to each choice (i.e., each set of $n$ subjects) in the experiments, and the mean and median subspace clustering errors for different numbers of subjects were computed.

\begin{table}[!htbp]
\small
\setlength{\abovecaptionskip}{0pt}
\setlength{\belowcaptionskip}{10pt}
\setlength{\tabcolsep}{2pt}
\centering
\caption{Average clustering error (\%) for different numbers of subjects on the Extended Yale B database.}
\label{tb:face2}
\begin{tabular}{|c|ccccccc|}
\hline
Algorithm & SLRR${_{PCA}}$ & SLRR${_{RP}}$ & LRRSC & LRR & SSC & LSA & LRSC \\
\hline
2 Subjects & & & & & & &\\
Mean & \textbf{1.29} & 4.81 & 1.78 & 2.54 & 1.86 & 41.97 & 4.25 \\
Median & 0.78 & 2.34 & 0.78 & 0.78 & \textbf{0} & 47.66 & 3.13 \\
\hline
3 Subjects & & & & & & &\\
Mean & \textbf{1.94} & 6.18 & 2.61 & 4.23 & 3.24 & 56.62 & 6.07 \\
Median & 1.56 & 4.17 & 1.56 & 2.6 & \textbf{1.04} & 61.98 & 5.73 \\
\hline
5 Subjects & & & & & & &\\
Mean & \textbf{2.72} & 6.03 & 3.19 & 6.92 & 4.33 & 59.29 & 10.19 \\
Median & \textbf{2.5} & 4.98 & 2.81 & 5.63 & 2.82 & 56.25 & 7.5 \\
\hline
8 Subjects & & & & & & &\\
Mean & \textbf{3.21} & 7.42 & 4.01 & 13.62 & 5.87 & 57.17 & 23.65 \\
Median & \textbf{2.93} & 4.98 & 3.13 & 9.67 & 4.49 & 59.38 & 27.83 \\
\hline
10 Subjects & & & & & & &\\
Mean & 3.49 & \textbf{3.44} & 3.7 & 14.58 & 7.29 & 59.38 & 31.46 \\
Median & \textbf{2.81} & 3.28 & 3.28 & 16.56 & 5.47 & 60.94 & 28.13 \\
\hline
\end{tabular}
\end{table}

\begin{figure}[!htb]
\centering
\includegraphics[width=7cm]{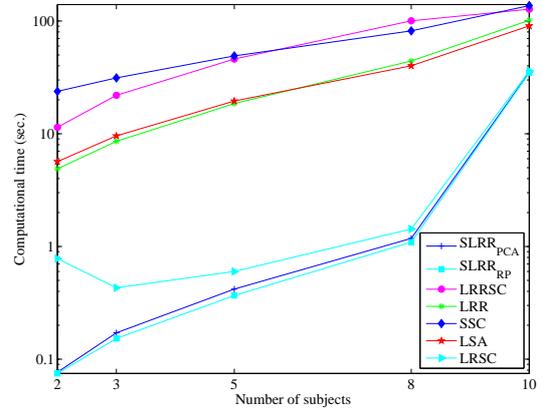}
\caption{Average computation time (seconds) for different numbers of subjects on the Extended Yale B database.}
\label{fig:facetime2} 
\end{figure}

Table \ref{tb:face2} shows the clustering results for the various algorithms using different numbers of subjects. The SLRR${_{PCA}}$ algorithm almost consistently obtained lower mean clustering errors than the other algorithms for a varying number of subjects. This confirms that our proposed method is very effective and robust against a varying number of subjects with respect to face clustering. We also observed that the clustering performance by SLRR${_{RP}}$ outperforms that of SLRR${_{PCA}}$ by a very small margin with 10 subjects. However, SLRR${_{RP}}$ performs worse than SLRR${_{PCA}}$ as well as LRRSC and SSC when the number of subjects is less than 10. However, we also see that increasing the number of clusters of SLRR${_{RP}}$ achieved a greater improvement compared with LRR. This phenomenon can be explained as follows. On the one hand, the clustering results of SLRR${_{RP}}$ are effected largely by the randomly generated project matrix. On the other hand, what we emphasize is the importance of the determination of low-rank matrix recovery techniques for an alternative low-rank matrix. Moreover, we also compared the computational costs shown in Fig. \ref{fig:facetime2}. The computational costs of  SLRR${_{PCA}}$ and SLRR${_{RP}}$ are very similar, and only slightly better than LRSC. LRSC also had relatively low computational time at the expense of degraded performance in the experiments. In fact,   SLRR${_{PCA}}$ and SLRR${_{RP}}$ achieved high efficiency owing to completely avoiding the iterative SVD computation. Both of these run much faster than the other algorithms, e.g., LRR, LRRSC, SSC, and LSA.

Figure \ref{fig:affinity} depicts seven representative examples of the affinity graph matrix produced by the different algorithms for the Extended Yale Database B with five subjects. Clearly there are five diagonal blocks in each affinity graph. The smaller the number of non-zero elements lying outside the diagonal blocks is, the more accurate are the clustering results in spectral clustering. It is clear from Fig. \ref{fig:affinity} that the affinity graph matrix produced by SLRR${_{PCA}}$ has a distinct block-diagonal structure as is the case for LRRSC. This shows why SLRR${_{PCA}}$ outperforms the other algorithms.

\begin{figure*}[!htbp]
\begin{minipage}[t]{0.14\linewidth}
\centering
\subfigure[SLRR${_{PCA}}$]{
\label{fig:affinity:c}
\includegraphics[width=1\textwidth]{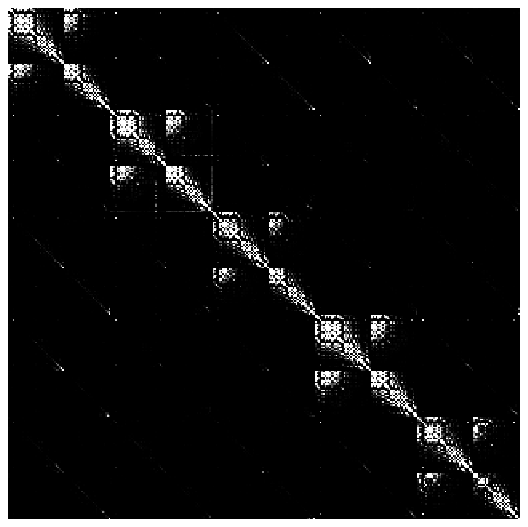}}
\end{minipage}%
\begin{minipage}[t]{0.14\linewidth}
\centering
\subfigure[SLRR${_{RP}}$]{
\label{fig:affinity:c}
\includegraphics[width=1\textwidth]{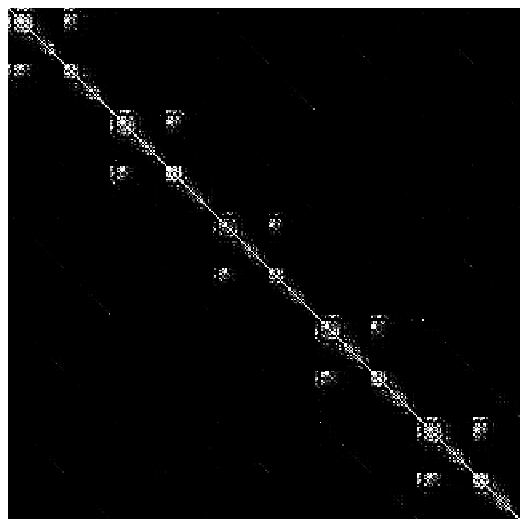}}
\end{minipage}%
\begin{minipage}[t]{0.14\linewidth}
\centering
\subfigure[LRRSC]{
\label{fig:affinity:a}
\includegraphics[width=1\textwidth]{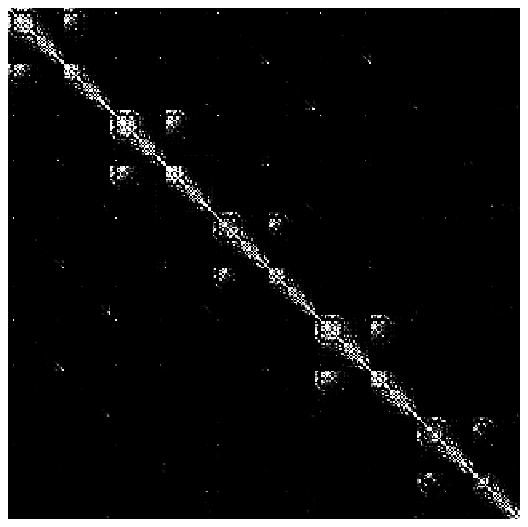}}
\end{minipage}%
\begin{minipage}[t]{0.14\linewidth}
\centering
\subfigure[LRR]{
\label{fig:affinity:b}
\includegraphics[width=1\textwidth]{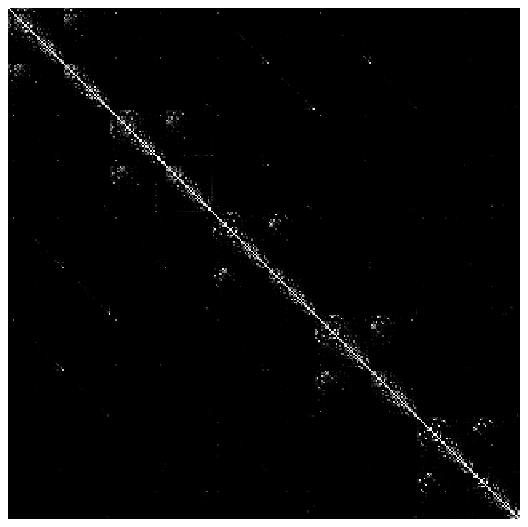}}
\end{minipage}%
\begin{minipage}[t]{0.14\linewidth}
\centering
\subfigure[SSC]{
\label{fig:affinity:d}
\includegraphics[width=1\textwidth]{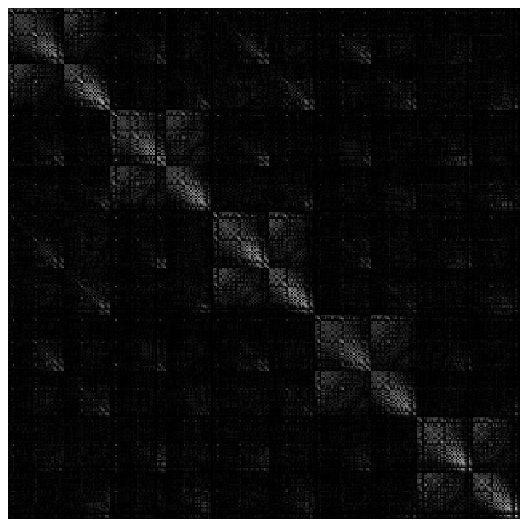}}
\end{minipage}%
\begin{minipage}[t]{0.14\linewidth}
\centering
\subfigure[LSA]{
\label{fig:affinity:e}
\includegraphics[width=1\textwidth]{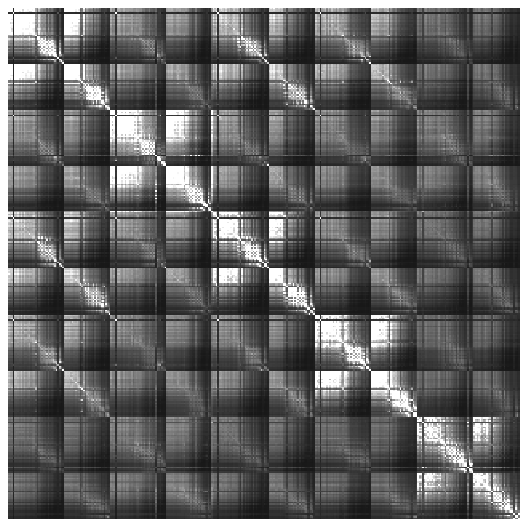}}
\end{minipage}%
\begin{minipage}[t]{0.14\linewidth}
\centering
\subfigure[LRSC]{
\label{fig:affinity:f}
\includegraphics[width=1\textwidth]{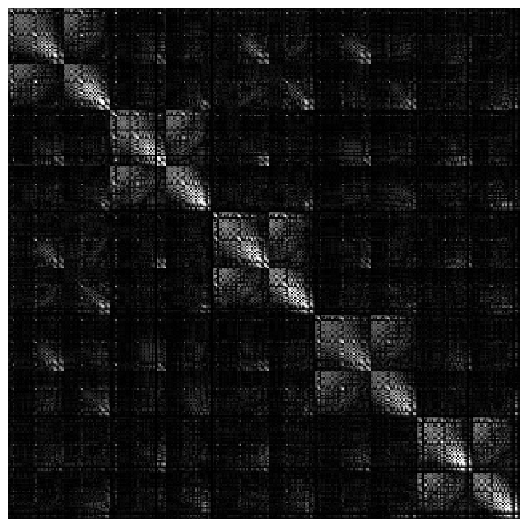}}
\end{minipage}%
\caption{Representative examples of the affinity graph matrix produced when using different algorithms for the Extended Yale Database B with five subjects.}
\label{fig:affinity}
\end{figure*}

\begin{figure*}[!htbp]
\centering
\includegraphics[width=0.24\textwidth]{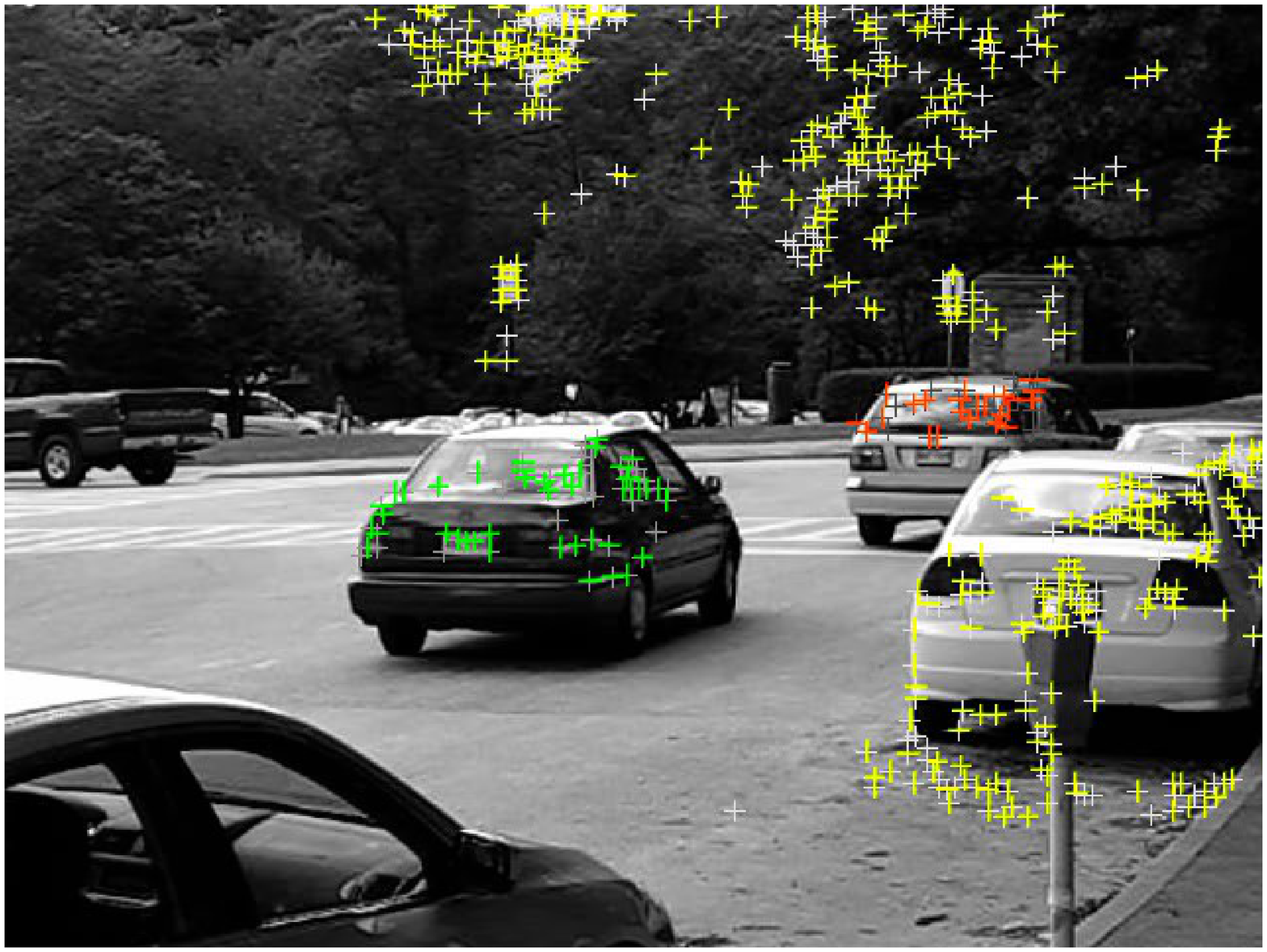}
\includegraphics[width=0.24\textwidth]{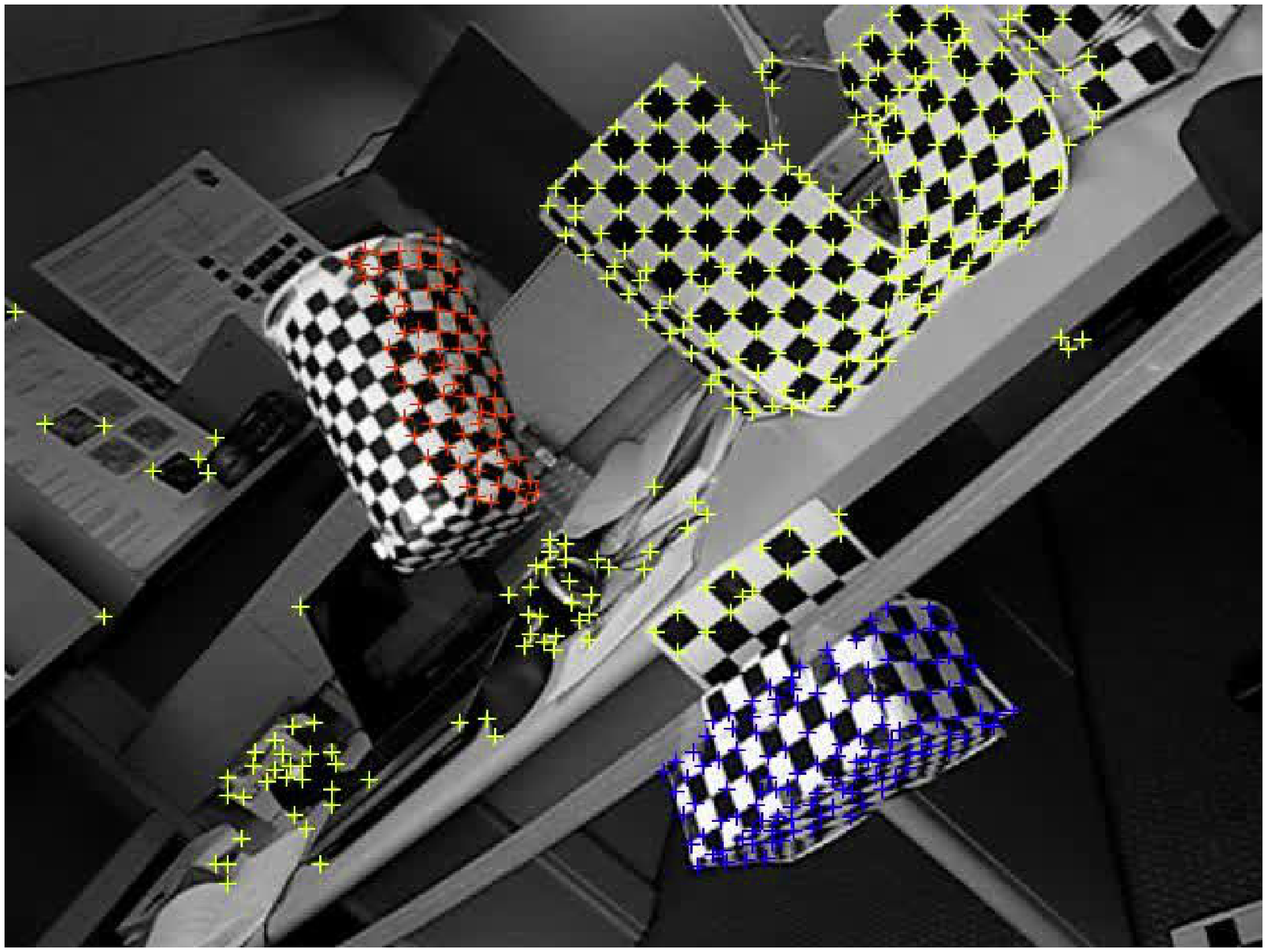}
\includegraphics[width=0.24\textwidth]{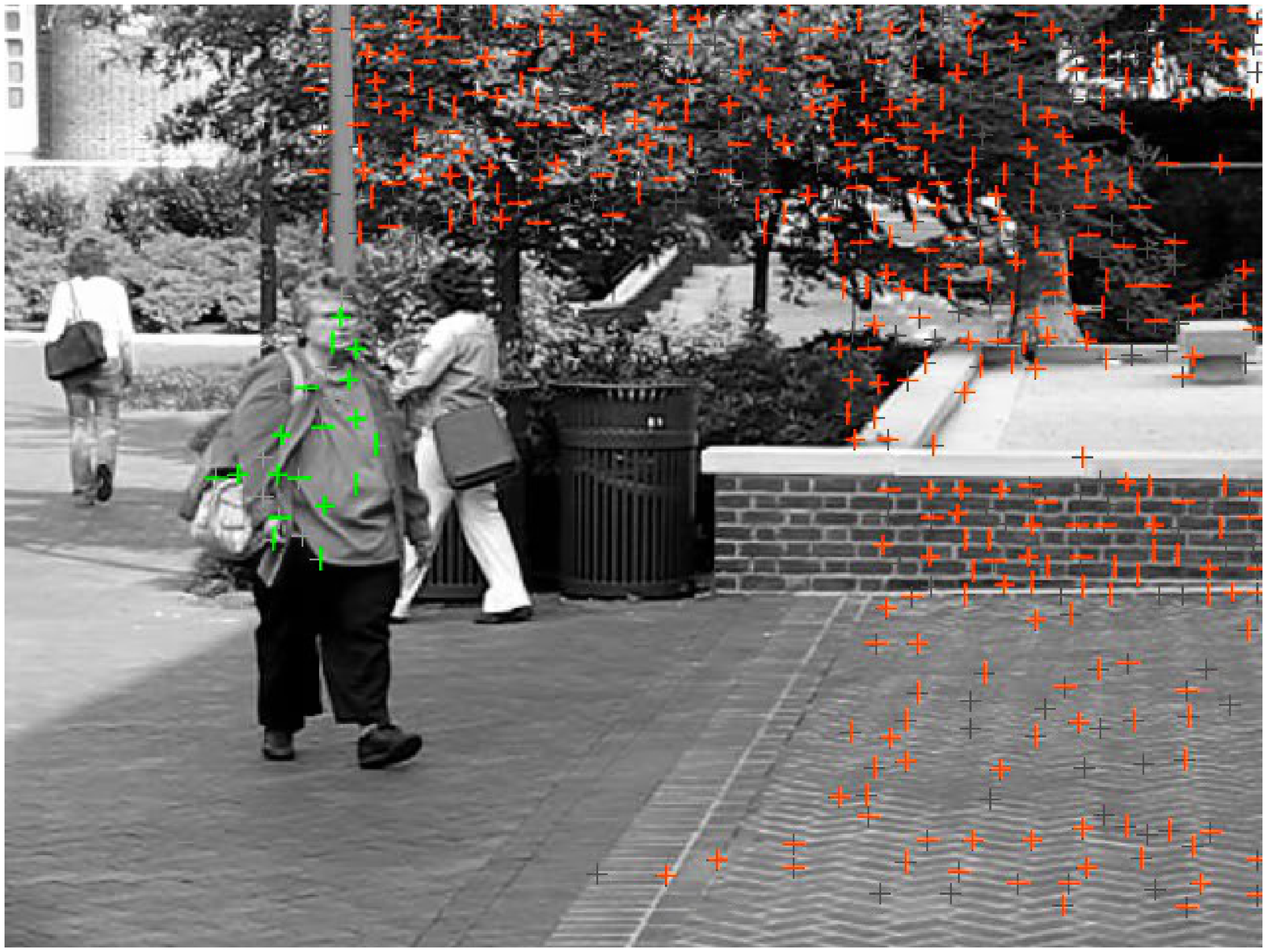}
\includegraphics[width=0.24\textwidth]{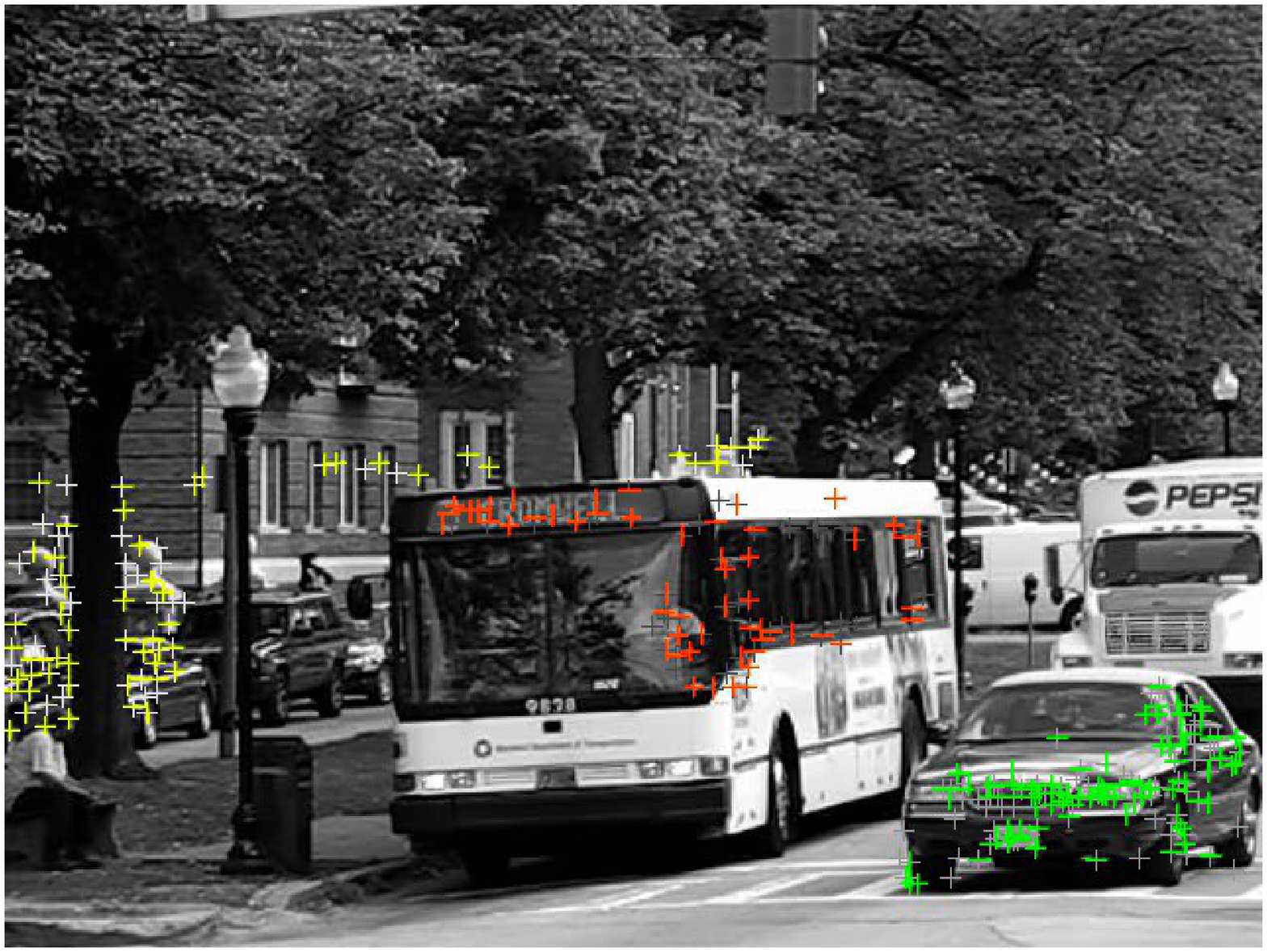}
\caption{Example frames from four video sequences of the Hopkins 155 database with traced feature points.}
\label{fig:h155example} 
\end{figure*}

\subsection{Experiments on motion segmentation}

In this subsection we discuss applying SLRR to the Hopkins 155 database. The task of motion segmentation involves segmenting tracked feature point trajectories of multiple rigidly moving objects into their corresponding motions in a video sequence. Each video sequence is a sole subspace segmentation task. There are 156 video sequences of two or three motions in the Hopkins 155 database. As pointed out in \cite{Boult1991Factor}, the tracked feature point trajectories for a single motion lie in a low-dimensional subspace. Therefore, the motion segmentation problem is equivalent to the problem of subspace clustering.

For each video sequence, tracked feature point trajectories were extracted automatically and the original data were almost noise-free, i.e., low-rank. Hence, we designed two experiments to evaluate the performance of the proposed SLRR in motion segmentation. First, we used the original tracked feature point trajectories associated with each motion to validate SLRR. Next, we used PCA to project the original data onto a  -dimensional subspace, where   is the number of motions in each video sequence. Note that both scenarios were implemented in an affine subspace, thereby ensuring that the sum of the feature point trajectory coefficients was $1$.

Figures \ref{fig:h155:a} and \ref{fig:h155:b} show the influence of parameters $\lambda$ and $\alpha$ under two experimental settings of the Hopkins 155 database on the average clustering error of SLRR. It is clear that increasing $\alpha$ from 1 to 2 produced higher clustering performance. For example, the clustering error varies from 0.88\% to 4.22\% while $\lambda$ ranges from ${\rm{1}}{{\rm{e}}^{ - 3}}$ to ${\rm{1}}{{\rm{e}}^{ - 2}}$ with $\alpha = 2$ in Fig. \ref{fig:h155:a}. If $\lambda$ ranges from ${\rm{4}}{{\rm{e}}^{ -3}}$ to ${\rm{7}}{{\rm{e}}^{ - 3}}$, the clustering error appears to change only slightly, varying from 0.88\% to 1.04\% in Fig. \ref{fig:h155:a}. However, SLRR suffers from a decline in clustering performance when $\alpha$ continues to increase from 2 to 4 in Fig. \ref{fig:h155:a}. We observed a similar influence of parameters $\lambda$ and $\alpha$ in Fig. \ref{fig:h155:b}. This implies that the clustering performance of SLRR on the Hopkins 155 database remains relatively stable for a large range of $\lambda$ with $\alpha=2$.

\begin{figure}[!htbp]
\begin{minipage}[t]{0.5\linewidth}
\centering
\subfigure[]{
\label{fig:h155:a} 
\includegraphics[width=1\textwidth]{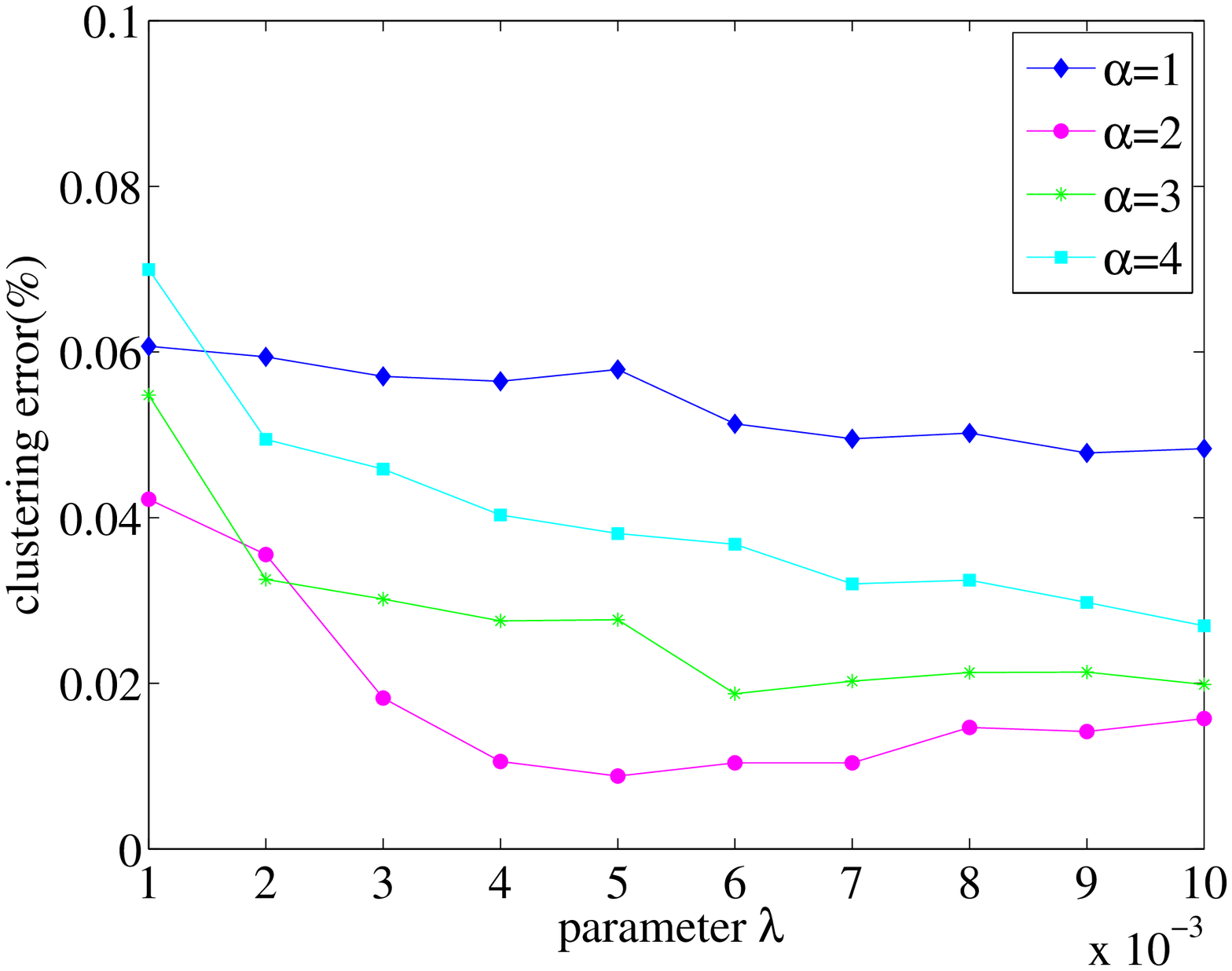}}
\end{minipage}%
\begin{minipage}[t]{0.5\linewidth}
\centering
\subfigure[]{
\label{fig:h155:b} 
\includegraphics[width=1\textwidth]{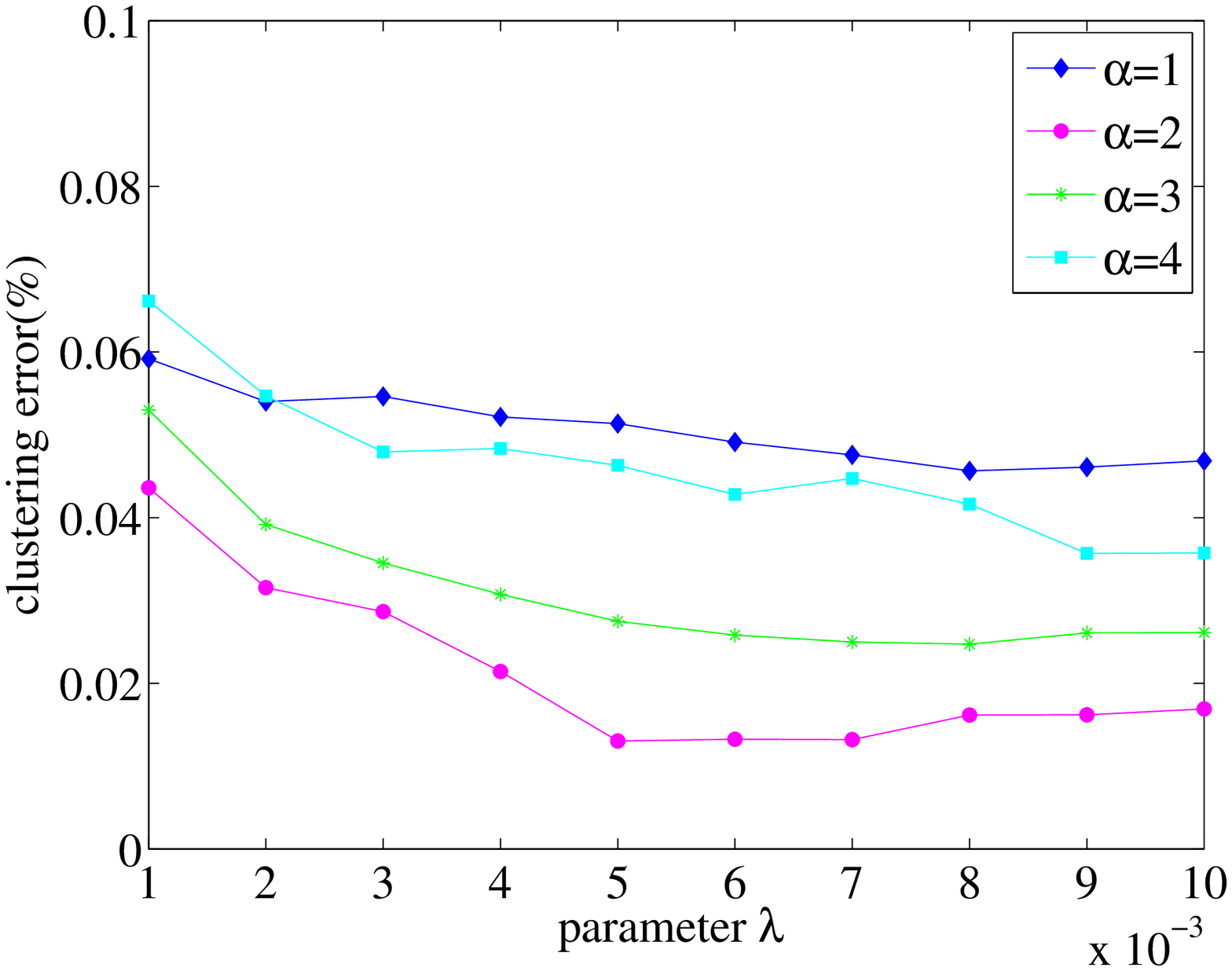}}
\end{minipage}
\caption {Influences of the parameter $\lambda$ of SLRR. (a) The average clustering error of SLRR on the Hopkins 155 database with the $2F$-dimensional data points. (b) The average clustering error of SLRR on the Hopkins 155 database with the 4n-dimensional data by applying PCA.}
\label{fig:h155} 
\end{figure}

Tables \ref{h155tb1} and \ref{h155tb2} show the average clustering errors of the different algorithms on two experimental settings of the Hopkins 155 database. SLRR obtained 0.88\% and 1.3\% clustering errors for the two experimental settings. In both experimental settings, SLLR significantly outperformed the other algorithms. We used the normalization step of symmetric low-rank representation to improve the clustering performance to further seek an affinity matrix. Under the same parameter settings, we also report the clustering error of SLRR within parentheses without the normalization step in Tables \ref{h155tb1} and \ref{h155tb2}. The normalization step helps improve the clustering results. Compared with LRR, SLRR achieved 0.83\% and 0.87\% improvement on clustering errors for the two settings, respectively. The improvement comes from the advantages of the compactness of the symmetric low-rank representation. LRR has lower errors than SSC owing to the post-processing of its coefficient matrix. This also confirms the necessity of exploiting the structure of the low-rank representation for an affinity graph matrix. Besides, LRSC still has higher errors than the other algorithms in both experiments.

\begin{table}[!htbp]
\small
\setlength{\abovecaptionskip}{0pt}
\setlength{\belowcaptionskip}{10pt}
\setlength{\tabcolsep}{1pt}
\centering
\caption{Average clustering error (\%) and mean computation time (seconds) when applying the different algorithms to the Honkins 155 database, with the $2F$-dimensional data points.}
\label{h155tb1}
\begin{tabular}{|c|c|c|c|c|c|}
\hline
\multirow{2}{*}{Algorithm} & \multicolumn{4}{|c|}{Error} & \multirow{2}{*}{Time} \\
\cline{2-5}
 & mean & median & std. & max. &\\
\hline
  SLRR & \textbf{0.88} (3)  & \textbf{0 (0)}  & \textbf{3.63} (9.33) & 38.06 (49.25) & \textbf{0.09 (0.09)} \\
  LRRSC & 1.5  & \textbf{0}  & 4.36  & \textbf{33.33}  & 4.71 \\
  LRR & 1.71  & \textbf{0} & 4.86  & \textbf{33.33} & 1.29 \\
  SSC & 2.23  & \textbf{0}  & 7.26  & 47.19 & 1.02 \\
  LSA & 11.11  & 6.29 & 13.04  & 51.92 & 3.44 \\
  LRSC & 4.73  & 0.59  & 8.8  & 40.55 & 0.14 \\
\hline
\end{tabular}
\end{table}

\begin{table}[!htbp]
\small
\setlength{\abovecaptionskip}{0pt}
\setlength{\belowcaptionskip}{10pt}
\setlength{\tabcolsep}{1pt}
\centering
\caption{Average clustering error (\%) and mean computation time (seconds) when applying the different algorithms to the Honkins 155 database, with the $4n$-dimensional data points obtained using PCA.}
\label{h155tb2}
\begin{tabular}{|c|c|c|c|c|c|}
\hline
\multirow{2}{*}{Algorithm} & \multicolumn{4}{|c|}{Error} & \multirow{2}{*}{Time} \\
\cline{2-5}
 & mean & median & std. & max. &\\
\hline
  SLRR & \textbf{1.3} (2.42)  & \textbf{0 (0)}  & \textbf{5.1} (8.14)  & \textbf{42.16} (49.25) & \textbf{0.07} (0.08) \\
  LRRSC & 1.56  & \textbf{0}  & 5.48  & 43.38  & 4.62\\
  LRR & 2.17  & \textbf{0} & 6.58 & 43.38 & 0.69  \\
  SSC & 2.47  & \textbf{0}  & 7.5  & 47.19 & 0.93 \\
  LSA  & 4.7  & 0.6  & 10.2  & 54.51 & 3.35 \\
  LRSC & 4.89  & 0.63  & 8.91  & 40.55 & 0.13 \\
\hline
\end{tabular}
\end{table}

The computational cost of SLRR is much lower than that of the other algorithms owing to its closed form solution. High clustering performance can also be obtained when the original data are used directly in SLRR. This phenomenon occurs because most sequences are clean, i.e., low-rankness property. However, this does not deny the importance of pursing an alternative low-rank matrix by low-rank matrix recovery techniques. Clean data are not easily obtained because of noise or corruption in real observations.

\subsection{Discussion}
\label{sec:Discussion}
Our experiments show that the performance of SLRR and LRR  differs, with a relative clustering error reduction of more than 10\% in some cases. In what follows, we discuss the connection between SLRR and LRR.

First, LRR not only seeks the best low-rank representation of high-dimensional data for matrix recovery, but also recovers the true subspace structures. Contrarily, SLRR focuses only on how to recover the true subspace structures. Generally, ${z_{ij}}$ differs from ${z_{ji}}$ in the low-rank representation $Z$ obtained by LRR, where ${z_{ij}}$ or ${z_{ji}}$ depicts the membership between data points $i$ and $j$. LRR constructs the affinity for the spectral clustering input using a symmetrization step of the low-rank representation results, i.e., ${Z^{\ast}} = \left| Z \right| + \left| {{Z^T}} \right|$. Evaluating the membership between data points, however, is not good, because LRR attempts to enforce symmetry of the affinity using this trick, whereas SLRR directly models the symmetric low-rank representation, thereby ensuring weight consistency for each pair of data points. The symmetric low-rank representation given by SLRR effectively preserves the subspace structures of high-dimensional data.

Second, SLRR further exploits the intrinsically geometrical structure of the membership of data points preserved in the symmetric low-rank representation. Note that the mechanism for exploiting this has been elaborated in our previous work, i.e., LRRSC \cite{Chen2014SC}. In other words, SLRR makes full use of the angular information of the principal directions of the symmetric low-rank representation so that highly correlated data points of subspaces are clustered together. This is a critical step in calculating the membership between data points. Fig. \ref{fig:affinity} shows that the block-diagonal structure of the affinity produced by SLRR is more distinct and compact than that obtained by LRR. The experimental results demonstrate that this significantly improves subspace clustering performance.

Finally, SLRR provides a more flexible model of the low-rank representation. SLRR integrates the collaborative representation combined with low-rank matrix recovery techniques into a low-rank representation with respect to various types of noise, e.g., Gaussian noise and arbitrary sparse noise. Additionally, it avoids iterative SVD operations while learning a symmetric low-rank representation. However, we need to emphasize that SLRR does not pursue the lowest-rank representation of data for evaluating the membership between data points. Strictly speaking, it does not make sense to pursue only the lowest-rank representation of data. Let us consider the LRR model again. The corresponding optimal solution can be obtained for an arbitrary value of parameter $\lambda$ in problem \eqref{eq:lrr2}. Obviously, we cannot determine which low-rank matrix of the optimal solution is desirable without prior knowledge of the data set. Hence, it is reasonable that SLRR tries to obtain a symmetric low-rank representation of the data. This also explains why we use the constraint, $rank(A) \le r$, to guarantee the low-rank property of the symmetric representation of the data in problem \eqref{eq:SLRR} or \eqref{eq:TheFinalSLRR}. In fact, in our experiments, we have also discussed some of the detail involved in estimating the rank of a data matrix of images of various examples, for example, images of an individual¡¯s face and handwritten images of a digit.

\section{Conclusions}
\label{sec:Conclusions}

In this paper we presented a method called SLRR, which considers collaborative representation combined with low-rank matrix recovery techniques to create a low-rank representation for robust subspace clustering. Unlike time-consuming SVD operations in many existing low-rank representation based algorithms, SLRR involves learning a symmetric low-rank representation in a closed form solution by solving the symmetric low-rank optimization problem, which greatly reduces computational cost in practical applications. Experimental results on benchmark databases demonstrated that SLRR is efficient and effective for subspace clustering compared with several state-of-the-art subspace clustering algorithms.

SLRR is a simple and effective method, which is considered an improvement over our previously proposed LRRSC \cite{Chen2014SC}. However, several problems remain to be solved. In the implementation of SLRR, it is important how to introduce low-rank matrix recovery algorithms, because a proper alternative low-rank matrix may significantly improve the subspace clustering performance. In addition, the determination of the parameter $r$ for pursing an alternative low-rank matrix by low-rank matrix recovery or feature extraction is also an intractable problem. Moreover, it is difficult to estimate a suitable value of ${\lambda}$ without prior knowledge. In future work, we will investigate these problems for practical applications.

\section*{Acknowledgements}

The authors would like to thank the anonymous reviewers for their valuable comments and suggestions. This research was supported by the National Basic Research Program of China (973 Program) under Grant 2011CB302201 and the National Science Foundation of China under Grant 61303015.

\bibliographystyle{model1-num-names}
\bibliography{slrr}

\end{document}